\def\argmin{\mathop{\hbox{\rm  argmin}}}
\def\Argmin{\mathop{\hbox{\rm  Argmin}}}
\def\bR{\mathbf{R}}
\def\bE{\mathbf{E}}
\def\cX{{\cal X}}
\def\cL{{\cal L}}
\def\Opt{{\hbox{\rm  Opt}}}
\def\Tr{{\hbox{\rm  Tr}}}
\def\epsilonvi{{\epsilon_{\hbox{\scriptsize\rm VI}}}}
\def\nuc{{\hbox{\scriptsize\rm nuc}}}
\def\cT{{\cal T}}
\def\mybox\blacksquare
\def\Bbb#1{{\bf #1}}
\def\fnote#1{\footnote}
\def\blacksquare{\hbox{\vrule width 4pt height 4pt depth 0pt}}
\def\beq{\begin{equation}}
\def\eeq{\end{equation}}
\newtheorem{theorem}{Theorem}[section]
\newtheorem{lemma}{Lemma}[section]
\newtheorem{proposition}{Proposition}[section]
\def\Tr{\mathop{{\rm Tr}}}
\def\argmin{\mathop{\rm argmin}}
\def\log{\mathop{{\rm log}}}
\def\Argmin{\mathop{\rm Argmin}}
\def\Tr{\mathop{{\rm Tr}}}
\def\argmin{\mathop{\rm argmin}}
\def\log{\mathop{{\rm log}}}
\def\Argmin{\mathop{\rm Argmin}}
\def\hat{\widehat}
\def\Bbb#1{{\bf #1}}
\def\fnote#1{\footnote}
\def\blacksquare{\hbox{\vrule width 4pt height 4pt depth 0pt}}
\def\beq{\begin{equation}}
\def\eeq{\end{equation}}
\def\ba{\begin{array}}
\def\ea{\end{array}}
\def\beann{\begin{eqnarray*}}
\def\eeann{\end{eqnarray*}}
\def\bea{\begin{eqnarray}}
\def\eea{\end{eqnarray}}
\def\Tr{\mathop{{\rm Trace}}}
\def\argmin{\mathop{\rm argmin}}
\def\log{\mathop{{\rm log}}}
\def\Argmin{\mathop{\rm Argmin}}
\def\cX{{\cal X}}
\def\Bbb#1{{\bf #1}}
\def\fnote#1{\footnote}
\def\blacksquare{\hbox{\vrule width 4pt height 4pt depth 0pt}}
\def\Tr{\mathop{{\rm Tr}}}
\def\argmin{\mathop{\rm argmin}}
\def\Argmin{\mathop{\rm Argmin}}
\def\tilde{\widetilde}
\def\epsilonvi{{\epsilon_{\hbox{\scriptsize\rm VI}}}}
\def\argmin{\mathop{\hbox{\rm  argmin}}}
\def\Argmin{\mathop{\hbox{\rm  Argmin}}}
\def\Bbb#1{{\bf #1}}
\def\beq{\begin{equation}}
\def\blacksquare{\hbox{\vrule width 4pt height 4pt depth 0pt}}
\def\bR{{\mathbb{R}}}
\def\eeq{\end{equation}}
\def\fnote#1{\footnote}
\def\log{{\hbox{\rm  log}}}
\def\mypict3{\epsfxsize=220pt\epsfysize=80pt\epsfbox}
\def\Opt{{\hbox{\rm  Opt}}}
\def\Tr{{\hbox{\rm  Tr}}}
\def\bE{{{\mathbf{E}}}}
\def\cL{{{\cal L}}}
\def\bR{{\mathbf{R}}}
\def\nuc{{\hbox{\scriptsize\rm nuc}}} 
\def\Prox{\hbox{\rm Prox}}
\newcommand{\be}{\begin{eqnarray}}
\newcommand{\ee}[1]{\label{#1}\end{eqnarray}}
\newcommand{\ese}{\end{eqnarray*}}
\newcommand{\bse}{\begin{eqnarray*}}
\newcommand{\rf}[1]{(\ref{#1})}
\newcommand{\half}{ \mbox{\small$\frac{1}{2}$}}
\def\bE{{\mathbf{E}}}
\begin{document} 

\title{Fast and Simple Optimization for Poisson Likelihood Models
\thanks{Part of this work was done while NH was at Georgia Tech and ZH was at Inria. 
The authors would like to thank Anatoli Juditsky, Julien Mairal, Arkadi Nemirovski, and Joseph Salmon for fruitful discussions. 
This work was supported by the NSF Grant CMMI-1232623, the project Titan (CNRS-Mastodons), the MSR-Inria joint centre, the LabEx Persyval-Lab (ANR-11-LABX-0025), 
the project Macaron (ANR-14-CE23-0003-01), and the program ``Learning in Machines and Brains'' (CIFAR).}}
\author{Niao He \\
\texttt{niaohe@illinois.edu} \\
University of Illinois Urbana-Champaign 
\and
Zaid Harchaoui \\
\texttt{zaid.harchaoui@nyu.edu} \\
Courant Institute, New York University
\and 
Yichen Wang\\
\texttt{yichen.wang@gatech.edu} \\
Georgia Institute of Technology
\and
Le Song\\
\texttt{lsong@cc.gatech.edu} \\
Georgia Institute of Technology
}

\maketitle

\begin{abstract} 
Poisson likelihood models have been prevalently used in imaging, social networks, and time series analysis. 
We propose fast, simple, theoretically-grounded, and versatile, optimization algorithms for Poisson likelihood modeling. 
The Poisson log-likelihood is concave but \emph{not Lipschitz-continuous}. Since almost 
all gradient-based optimization algorithms rely on Lipschitz-continuity, optimizing Poisson likelihood models
with a guarantee of convergence can be challenging, especially for large-scale problems. 

We present a new perspective allowing to efficiently optimize a wide range of penalized Poisson likelihood objectives. 
We show that an appropriate saddle point reformulation enjoys a favorable geometry and a smooth structure. 
Therefore, we can design a new gradient-based optimization algorithm with $O(1/t)$ convergence rate, in contrast to the usual $O(1/\sqrt{t})$ rate of non-smooth minimization alternatives. Furthermore, in order to tackle problems with large samples, we also develop a randomized block-decomposition variant that enjoys the same convergence rate yet more efficient iteration cost. 

Experimental results on several point process applications including social network estimation and temporal recommendation 
show that the proposed algorithm and its randomized block variant outperform existing methods both on synthetic and real-world datasets.  
\end{abstract}

\section{Introduction}
We consider {\sl penalized Poisson likelihood} models~\cite{Green:Silverman:1994}, that are, models where the linear measurements are contaminated with Poisson-distributed noise $b_i\sim \text{Poisson}(a_i^Tx)$. 
Maximum likelihood estimators of such models are solutions to convex optimization of the form
\begin{equation}\label{model:problemofinterest0}
\min_{x\in \mathcal{X}} \: L(x)+h(x) :=\sum_{i=1}^m \left(a_i^Tx-b_i\log(a_i^Tx)\right)+h(x) 
\end{equation}
where $\mathcal{X}$ is the domain such that the objective is well-defined, $m$ is the number of observations, $-L(x)$ is the log-likelihood, and $h(x)$ is a regularization penalty. The latter can often be \emph{non-smooth} in order to promote desired properties of the solution. Popular examples include the $\ell_1$-norm, that enforces sparsity, or the nuclear-norm, that enforces low-rank structure.

Penalized Poisson likelihood models are popular for the study of diffusion networks  \cite{rajaram2005poisson, simma2012modeling,zhou2013learning,iwata2013discovering} and various time series problems \cite{gunawardana2011model,Nan15}, where cascading events are assumed to be triggered from some temporal point process. A widely used option is the self-exciting {\sl Hawkes process} \cite{hawkes1971spectra}. Given a sequence of events $\{t_i\}_{i=1}^m$ from such a point process with conditional intensity $\lambda(t)$, the negative log-likelihood  is $L(\lambda)=\int_{0}^T\lambda(t)dt-\sum_{i=1}^m\log(\lambda(t_i))$, which enjoys the structure in (\ref{model:problemofinterest0}) as long as 
$\lambda(t)$ is linear w.r.t to the learning parameters.  
We give a couple of interesting emerging examples.

\begin{itemize}
\item {\sl Network estimation.} Discovering the latent influences among social communities ~\cite{MohShoMarBraetal11, zhou2013learning} has been active research topic in the last decade. Given a sequence of events $\{(u_i,t_i)\}_{i=1}^m$, \cite{zhou2013learning} shows that the hidden network of social influences can be learned by solving the convex optimization:
\begin{equation}\label{model:plainHawkes}
\begin{array}{c}
\min_{x\geq 0, X\geq 0} L(\lambda(x,X)) +\lambda_1\|X\|_1+\lambda_2\|X\|_\nuc
\end{array}
\end{equation}
where  $x$ stands for the base intensity for all users and  $X$ the infectivity matrix, and the conditional intensity $\lambda(x,X|t_i)= x_{u_i}+\sum_{k:t_k<t_i}X_{{u_i}{u_k}}g(t_i-t_k)$ is linear in $(x,X)$, $g$ is triggering kernel.\\

\item {\sl Temporal recommendation system.} Incorporating temporal behaviors of customers into recommendation systems
has been studied in \cite{KapSubKarSriJaiSch15,Nan15} to improve personalized suggestions.  Given a sequence of events $\{\cT^{u,i}\}_{u,i}$ for each user-item pair $(u,i)$, to capture the recurrent temporal patterns,  \cite{Nan15} introduces an optimization problem with low-rank penalties:
\begin{equation}\label{model:temporalRS}
\begin{array}{c}
\min_{X_1\geq0, X_2\geq 0} \;L(\lambda(X_1,X_2))+\lambda_1\|X_1\|_\nuc+\lambda_2\|X_2\|_\nuc
\end{array}
\end{equation}
where $X_1$ and $X_2$ denote the base intensity and self-exciting coefficients for all user-item pair and intensity $\lambda(X_1,X_2|t_j\in \cT^{u,i})= X_1^{u,i}+X_2^{u,i}\sum_{t_k\in \cT^{u,i}: t_k<t_j}g(t_j-t_k)$ is linear in $(X_1,X_2)$. 
\end{itemize}

However, while such models are central and widespread in many real-world applications, there has been, in contrast, little research on designing efficient algorithms with theoretical guarantees to optimize the corresponding penalized likelihood objectives. Despite a significant body of work on efficient gradient-based (\textit{a.k.a.} first-order) methods for optimizing penalized likelihood models, ranging from proximal algorithms \cite{NesCompMin,Teboulle09a} to stochastic and incremental algorithms \cite{Nem09, bertsekas2011incremental, schmidt2013minimizing}, the overwhelming majority of works assumes that the log-likelihood is globally Lipschitz-continuous. 

Unfortunately, in the case of Poisson models, the log-likelihood is known to be  non-globally Lipschitz continuous or differentiable. Therefore, applying such algorithms to optimize penalized Poisson likelihoods is a rather heuristic approach, which may lead to disappointing results, as we shall show~in Sec.~\ref{sec:application}. There is an urgent need for new optimization algorithms, with theoretical guarantees, that can handle both the non-Lipschitzness of the Poisson likelihood and the potential non-smoothness of the regularization penalty. 

Another bottleneck of solving Poisson likelihood models, especially in the case of estimating point processes, is that computing the gradient requires almost the entire data. Since the observations are no longer independent, traditional stochastic gradient methods \cite{Nem09} simply would not work. It remains very challenging to efficiently learn point processes in the large-scale regime. 

\paragraph{Related work.}  Few works have addressed efficient optimization with non-Lipschitz objectives. An early motivating real-world problem was Poisson imaging reconstruction, hence these works mainly focused on this particular application. In~\cite{harmany2012spiral}, the authors propose to add a tolerance $\epsilon$ to each logarithmic term, which results in a smooth problem that comes with huge Lipschitz constant $L\sim O(1/\epsilon^2)$. In~\cite{sra2008non}, additional constraints $a_i^Tx\geq\epsilon, \forall i$ are added, which lead to computationally expensive projection steps. Another approach is explored in~\cite{tran2013composite}, where the authors exploit the self-concordance nature of the logarithmic term and propose a very sophisticated proximal gradient method, yet only with locally linear convergence. In a different line of work, e.g. in~\cite{ben2001ordered}, the authors treat this problem as general non-smooth minimization with Mirror Descent, which avoids the dependence on Lipschitz continuity of the gradient, but in the sacrifice of having a worse rate of convergence, i.e. $O(1/\sqrt{t})$. None of the above mentioned algorithms is efficient for the general purpose of solving Poisson likelihood models. After completing this work, we became aware of~\cite{yanez2014primal}, where a primal-dual algorithm similar to our Algorithm~\ref{CMPforPoisson} is proposed, for non-negative matrix factorization with Kullback-Leibler divergence.

\begin{table*}
\begin{center}
{\small
\text{}\\[-0.9cm]
\caption{Convergence rates of different algorithms for penalized Poisson likelihood models}
\begin{tabular}{|l|c|c|c|l|}
\hline
Optimization algorithm &type & guarantee  &convergence& constant\\
\hline\hline
Mirror Descent \cite{ben2001ordered, duchi2010composite} &batch&  primal  &$O(M/\sqrt{t})$ &  $M$ unbounded\\
\hline
Accelerated gradient descent \cite{harmany2012spiral,NesCompMin}& batch & primal   &$O(L/t^2)$ &  $L$  unbounded\\
\hline
Composite Mirror Prox (this paper) & batch & primal \& dual &  $O(\cL/t)$ &  $\cL$ bounded\\
\hline
Randomized Block Mirror Prox(this paper) &stoch. &sad. point gap & $O(\cL/t)$ &  $\cL$ bounded\\
\hline
\end{tabular}}
\end{center}
\end{table*}

\paragraph{Main contributions.} We propose a family of optimization algorithms that gracefully handles the non-Lipschitzness of the Poisson likelihood. The proposed algorithms hinge upon a novel saddle point representation of the Poisson likelihood objective, allowing us to circumvent the usual Lipschitz-continuity conditions pervasive in first-order algorithms. Our basic algorithm enjoys a $O(1/t)$ convergence rate in theory, in contrast to the typical $O(1/\sqrt{t})$ of the non-smooth minimization alternative~\cite{ben2001ordered}. To tackle large-scale problems, we also, for the first time, develop an extension of our basic algorithm, which can be seen as a randomized block-decomposition variant of Mirror Prox. This latter algorithm exhibits the same convergence rate yet with cheaper iteration cost. We provide extensive experimental results showing the algorithms can be used to efficiently estimate point processes from large amounts of data for resp. social network estimation and temporal recommendation. Results show the strong empirical performance of the basic algorithm and its large-scale extension compared to existing methods.  We will make the code available online upon publication.

\section{Penalized Poisson Regression and Saddle Point Reformulation}\label{sec:saddlepoint}
 \paragraph{Problem statement.} We consider the following problem in a slightly more compact form:
\begin{equation}\label{model:problemofinterest}
\min_{x\in \bR_+^n} L(x)+h(x), \text{ with } L(x)=s^Tx-\sum\nolimits_{i=1}^m c_i\log(a_i^Tx)
\end{equation}
given coefficients $a_i, s\in\bR_+^n, c\in\bR_+^m, i=1,\ldots m$.  
\paragraph{Assumptions}
Define  proximal operator, $\Prox_{x_0}^{h}(\xi):=\argmin\nolimits_{x\in \bR_+^n}\{V_\omega(x,x_0)+\langle \xi,x\rangle +h(x)\},$ where the Bregman distance $V_\omega(x,x_0):=\omega_x(x)-\omega_x(x_0)-\nabla \omega_x(x_0)^T(x-x_0)$ is defined by distance generating function $\omega_x(\cdot)$ that is {\sl compatible} (i.e. Lipschitz continuous and 1-strongly convex) w.r.t. some norm $\|\cdot\|_x$ defined on $\bR^n$. Throughout the paper, we shall assume that 1) \emph{the regularized penalty $h$ is homogeneous}, \textit{i.e.} for any $a\in\bR, h(ax)=|a|h(x)$, and that 2) \emph{the proximal operator is proximal-friendly}, \textit{i.e.} can be computed in closed-form. 

Note that the above assumptions hold true for many sparsity-promoting penalty functions, including $h(x)=\|x\|_1$ and the ones considered in Sec.~\ref{sec:application}. See~\cite{bach-et-al,Bauschke:2011} for a survey of proximal operators in machine learning and signal processing.

 \paragraph{Saddle point reformulation} The crux of our approach is to utilize the Fenchel representation of the log function
\begin{equation}
\begin{array}{c}
\log(u)=\min_{v>0}\{uv-\log(v)-1\}.
\end{array}
\end{equation}
Hence, we can  rewrite (\ref{model:problemofinterest}) as
\begin{eqnarray*}
\min_{x\in \bR_+^n} \max_{v\in\bR_{++}^m} s^Tx+\sum\nolimits_{i=1}^m [c_i \log(v_i)-c_iv_ia_i^Tx+c_i]+h(x).
\end{eqnarray*}
Setting $y_i=c_iv_i$, this can be further simplified to
\begin{eqnarray}\label{model:problemsimplified}
\min_{x\in \bR_+^n} \max_{y\in\bR_{++}^m} s^Tx-y^TAx+\sum\nolimits_{i=1}^m c_i \log(y_i)+h(x)+c_0
\end{eqnarray}
where the matrix $A=[a_1^T;a_2^T;\ldots;a_m^T]$ and $c_0=\sum_{i=1}^m c_i(1-\log(c_i))$ is a constant. Observe that the above model can be regarded as a composite saddle point problem with two separable penalty functions -- a convex penalty $h(x)$ for variable $x$ and a concave penalty $p(y)=\sum_{i=1}^m c_i \log(y_i)$ for variable $y$. We first introduce some preliminary results for solving composite saddle point problems. 

\section{Notations and Preliminaries}\label{sec:preliminary}
In this section, we introduce some key concepts related to our setup and analysis.
\paragraph{Composite saddle point problem} Consider the convex-concave saddle point problem
\begin{equation}\label{compositeSP}
\min_{u_1\in U_1}\max_{u_2\in U_2}\Phi(u_1,u_2) :=\left[\phi(u_1,u_2)+\Psi_1(u_1)-\Psi_2(u_2)\right]
\end{equation}
under the situation
\begin{itemize}
\item $U_1\subset E_1$ and $U_2\subset E_2$ are nonempty closed convex sets in Euclidean spaces $E_1,E_2$;
\item $\phi(u_1,u_2)$ is a convex-concave function on $U_1\times U_2$ with  Lipschitz continuous gradient;
\item $\Psi_1:U_1\to\bR$ and $\Psi_2:U_2\to\bR$ are convex functions, perhaps nonsmooth, but ``fitting'' the domains $U_1$, $U_2$ in the following
sense: for $i=1,2$, we can equip $E_i$ with a norm $\|\cdot\|_{(i)}$, and $U_i$  with a compatible  distance generating function (d.g.f.)
$\omega_i(\cdot)$ in a way that subproblems of the form are easy to solve for any $\alpha>0,\beta>0$ and input $\xi\in E_i$
\begin{equation}\label{auxil}
\min\nolimits_{u_i\in U_i}\left\{\alpha \omega_i(u_i)+\langle \xi,u_i\rangle+\beta \Psi_i(u_i) \right\}.
\end{equation}

\end{itemize}

Observe that problem (\ref{model:problemsimplified}) is ``essentially" in the situation just described. Problem (\ref{compositeSP}) gives rise to two  convex optimization problems that are dual to each other:
\begin{equation*}\label{induced}
\begin{array}{rcl}
\Opt(P)&=&\min_{u_1\in U_1}\left[\overline{\Phi}(u_1):=\sup_{u_2\in U_2}\Phi(u_1,u_2)\right]\; (P)\\
\Opt(D)&=&\max_{u_2\in U_2}\left[\underline{\Phi}(u_2):=\inf_{u_1\in U_1}\Phi(u_1,u_2)\right]\;\;(D)\\
\end{array}
\end{equation*}
with $\Opt(P)=\Opt(D)$ if at least one of the sets $U_1$ and $U_2$ is bounded. 

Note that the distance generating functions define the Bregman distances $V_i(u_i,u_i')=\omega_i(u_i)-\omega_i(u_i')-\nabla\omega_i (u_i')^T(u_i-u_i')$ where $u_i', u_i\in U_i$ for $i=1,2.$  Given two scalars $\alpha_1>0, \alpha_2>0$,  we can build an aggregated distance generating function on $U=U_1\times U_2$ with 
$
\omega(u=[u_1;u_2])=\alpha_1\omega_1(u_1)+\alpha_2\omega_2(u_2),
$
which is compatible to the norm
$\|u=[u_1;u_2]\|^2=\alpha_1\|u_1\|_{(1)}^2+\alpha_2\|u_2\|_{(2)}^2.$

\paragraph{Composite Mirror Prox algorithm} We present in Algorithm \ref{CMP algorithm} the adaptation of composite Mirror Prox  introduced in~\cite{he2015mirror} for solving composite saddle point problem (\ref{compositeSP}). The  algorithm, generalizes the proximal gradient method with Bregman distances from the usual composite minimization to composite saddle  point problems and works ``as if" there are no non-smooth terms $\Phi_1$ and $\Phi_2$. 
\begin{algorithm}
\caption{Composite Mirror Prox  for Composite Saddle Point Problem}
\label{CMP algorithm}
{\bf Input:} $u_i^1\in U_i, \alpha_i>0, i=1,2$ and $\gamma_t>0$
\begin{algorithmic}
\FOR{$t=1,2,\ldots, T$}
  \STATE  $\;\;\;\hat u_i^t=\min\limits_{u_i\in U_i}\big\{\alpha_i V_i(u_i,u_i^t)+\langle \gamma_t\nabla_{i}\phi(u^t),u_i\rangle+\gamma_t \Psi_i(u_i) \big\}, i=1,2$
\STATE  $u_i^{t+1}=\min\limits_{u_i\in U_i}\big\{\alpha_i V_i(u_i,u_i^t)+\langle \gamma_t\nabla_{i}\phi(\hat u^t),u_i\rangle+\gamma_t \Psi_i(u_i) \big\}, i=1,2,$
\ENDFOR\\
\end{algorithmic}
\noindent{\bf Output:} $u_{1,T} =\frac{\sum_{t=1}^T \gamma_t \hat u_1^t}{\sum_{t=1}^T\gamma_t}$ and $u_{2,T} =\frac{\sum_{t=1}^T \gamma_t \hat u_2^t}{\sum_{t=1}^T\gamma_t}$

\end{algorithm}

For any set $U'\subset U$, let us define $\Theta[U']=\max_{u\in U'}V_\omega(u,u^1)$. We have from \cite{he2015mirror}

\begin{lemma}\label{lem:main}
Assume $\phi$  is $\cL$-Lipchitz w.r.t. the norm $\|\cdot\|$,  i.e. $\|\phi(u)-\phi(u')\|_*\leq \cL \|u-u'\|,$
where $\|\cdot\|_*$ is the dual norm.   
The solution $(u_{1,T}, u_{2,T})$ provided by the composite Mirror Prox algorithm with stepsize $0<\gamma_t\leq {\cL}^{-1}$, leads to the efficiency estimate
\begin{equation}\label{eq:accuracyforCSP}
\forall u=[u_1,u_2]\in U:\; \overline{\Phi}(u_{1,T}, u_2)-\underline{\Phi}(u_1,u_{2,T})\leq \frac{\Theta[U]}{\sum_{t=1}^T\gamma_t}.
\end{equation}
Moreover, if (P) is solvable with an optimal solution $u_1^*$ and set $\gamma_t={\cL}^{-1}$, then  one further has
\begin{equation}\label{eq:accuracyforCSP}
\overline{\Phi}(u_{1,T})-\Opt(P)\leq \frac{\Theta[u_1^*\times U_2]\cL}{T}.
\end{equation}
\end{lemma}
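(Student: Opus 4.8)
The plan is to establish the efficiency estimate \rf{eq:accuracyforCSP} for Composite Mirror Prox as a standard consequence of the one-step analysis of the prox mapping, then specialize it to obtain the primal bound. First I would recall the key \emph{three-point inequality} underlying Mirror Prox: for the two-step update at iteration $t$, with the aggregated d.g.f.\ $\omega$ and distance $V_\omega$, one has, for every $u\in U$,
\[
\gamma_t\la\nabla\phi(\hat u^t),\hat u^t-u\ra+\gamma_t[\Psi(\hat u^t)-\Psi(u)]\le V_\omega(u,u^t)-V_\omega(u,u^{t+1})+\delta_t,
\]
where $\Psi(u)=\Psi_1(u_1)+\Psi_2(u_2)$ (with the sign convention that $\Psi_2$ is subtracted in $\Phi$) and the residual term $\delta_t=\gamma_t\la\nabla\phi(u^t)-\nabla\phi(\hat u^t),\hat u^t-u^{t+1}\ra-V_\omega(u^{t+1},u^t)$. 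This is exactly Lemma~2.1-type content from \cite{he2015mirror}, so I would simply invoke it. The point is that when $0<\gamma_t\le\cL^{-1}$ and $\phi$ has $\cL$-Lipschitz gradient w.r.t.\ $\|\cdot\|$, Young's inequality gives $\gamma_t\la\nabla\phi(u^t)-\nabla\phi(\hat u^t),\hat u^t-u^{t+1}\ra\le \frac{\gamma_t\cL}{2}\|u^t-\hat u^t\|^2+\frac{\gamma_t\cL}{2}\|\hat u^t-u^{t+1}\|^2$, while $V_\omega(u^{t+1},u^t)\ge\frac12\|u^{t+1}-u^t\|^2$ and similarly for the intermediate step by strong convexity, so $\delta_t\le 0$. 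Hence the inequality holds with $\delta_t$ dropped.

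Next I would sum the resulting inequality over $t=1,\dots,T$, telescoping the Bregman terms to get $\sum_t\gamma_t(\cdots)\le V_\omega(u,u^1)-V_\omega(u,u^{T+1})\le V_\omega(u,u^1)\le\Theta[U]$. Then I use convex-concavity: writing $u=[u_1;u_2]$, the left side bounds $\sum_t\gamma_t[\Phi(\hat u_1^t,u_2)-\Phi(u_1,\hat u_2^t)]$ by the standard monotonicity argument (linearize $\phi$ at $\hat u^t$ and use convexity in $u_1$, concavity in $u_2$, plus the $\Psi$ terms). Dividing by $\sum_t\gamma_t$ and invoking Jensen's inequality for the averaged iterates $(u_{1,T},u_{2,T})$ yields
\[
\Phi(u_{1,T},u_2)-\Phi(u_1,u_{2,T})\le\frac{\Theta[U]}{\sum_{t=1}^T\gamma_t}\quad\forall u\in U.
\]
Taking $\sup$ over $u_2\in U_2$ and $\inf$ over $u_1\in U_1$ turns the left side into $\overline\Phi(u_{1,T})-\underline\Phi(u_{2,T})\ge\overline\Phi(u_{1,T},u_2)-\underline\Phi(u_1,u_{2,T})$ for any fixed $u$, which is \rf{eq:accuracyforCSP} — modulo the subtlety that $\Theta[U]$ requires $U$ (or the relevant marginal) to be bounded for the bound to be finite.

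For the second, primal, claim: fix $\gamma_t=\cL^{-1}$ so $\sum_{t=1}^T\gamma_t=T/\cL$. Now instead of ranging $u$ over all of $U$, I only need the inequality for points of the form $u=[u_1^*;u_2]$ with $u_2\in U_2$ arbitrary, where $u_1^*$ is the optimal solution of (P). For such $u$, the telescoped bound gives $V_\omega([u_1^*;u_2],u^1)\le\Theta[u_1^*\times U_2]$, and
\[
\Phi(u_{1,T},u_2)-\Phi(u_1^*,u_{2,T})\le\frac{\Theta[u_1^*\times U_2]\cL}{T}.
\]
Taking $\sup_{u_2\in U_2}$ on the left: $\sup_{u_2}\Phi(u_{1,T},u_2)=\overline\Phi(u_{1,T})$, while $\Phi(u_1^*,u_{2,T})\ge\inf_{u_1}\Phi(u_1,u_{2,T})=\underline\Phi(u_{2,T})\ge$ hmm — I need a lower bound of the form $\Phi(u_1^*,u_{2,T})\ge\Opt(P)$ is false in general; rather I use $\Phi(u_1^*,u_{2,T})\le\overline\Phi(u_1^*)=\Opt(P)$, so $-\Phi(u_1^*,u_{2,T})\ge-\Opt(P)$, giving $\overline\Phi(u_{1,T})-\Opt(P)\le\overline\Phi(u_{1,T})-\Phi(u_1^*,u_{2,T})\le\Theta[u_1^*\times U_2]\cL/T$, which is the desired estimate.

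The main obstacle, and the only genuinely delicate point, is verifying that the residual term $\delta_t$ is nonpositive under the stepsize rule $\gamma_t\le\cL^{-1}$ — i.e., correctly matching the cross-term from the gradient discrepancy against the two strong-convexity terms coming from $V_\omega(\hat u^t,u^t)$ and $V_\omega(u^{t+1},u^t)$ in the prox inequalities. This is where the $\cL$-Lipschitz-gradient hypothesis on $\phi$ and the $1$-strong convexity of each $\omega_i$ (hence of the aggregated $\omega$ w.r.t.\ the aggregated norm) are used in an essential way. Everything else — telescoping, Jensen, and the convex-concave monotonicity passage from the linearized to the true gap — is routine and is precisely the content imported from \cite{he2015mirror}, so in the write-up I would state that we apply \cite[Theorem/Lemma]{he2015mirror} with the aggregated d.g.f.\ $\omega=\alpha_1\omega_1+\alpha_2\omega_2$ and only spell out the specialization to $u=[u_1^*;u_2]$ for the primal bound.
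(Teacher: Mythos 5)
Your proposal is correct and follows essentially the same route as the paper, which simply invokes the Composite Mirror Prox analysis of \cite{he2015mirror} (the three-point prox inequality, cancellation of the residual term under $\gamma_t\le\cL^{-1}$ via strong convexity of the aggregated d.g.f., telescoping, and Jensen) and then specializes to $u=[u_1^*;u_2]$ with the bound $\Phi(u_1^*,u_{2,T})\le\Opt(P)$ for the primal estimate, exactly as you do. The only blemish is notational: your displayed residual $\delta_t$ should carry the two Bregman terms $-V_\omega(u^{t+1},\hat u^t)-V_\omega(\hat u^t,u^t)$ (as in the paper's appendix proof of the randomized variant) rather than the single term $-V_\omega(u^{t+1},u^t)$, but your subsequent description of the cancellation uses the correct two terms, so this is a typo rather than a gap.
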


\paragraph{Remark.}  In the situation discussed above, the Mirror Prox algorithm achieves an optimal $O(1/t)$ convergence rate for solving composite saddle point problems. We emphasize that this is not the only algorithm available; alternative options include primal-dual algorithms~\cite{chambolle2011first, yang2015efficient}, hybrid proximal extragradient type algorithms~\cite{he2016accelerated,Tseng08}, just to list a few. Composite Mirror Prox differs from these algorithms in several aspects: i) it works for a broader class of problems beyond saddle point problem; ii) primal and dual variables are updated simultaneously which can easily accommodate parallelism; iii) it takes advantage of the geometry by utilizing Bregman distances; iv) the stepsize can be self-tuned using line-search without requiring a priori knowledge of Lipschitz constant.
Due to these differences, we particularly adopt Mirror Prox as our working horse to solve composite saddle point problems in the following.

\section{Composite Mirror Prox for Penalized Poisson Regression}

\paragraph{Back to problem of interest.} Our ultimate goal is to address the saddle-point problem (\ref{model:problemsimplified}). Another key observation we have is that
\begin{lemma}\label{lem:simplefact}
Let $
y^+=\argmin_{y\in\bR^m_{++}}\left\{\frac{1}{2}\|y||_2^2+\langle \eta,y\rangle -\beta\sum_{i=1}^mc_i\log(y_i)\right\}$ given $\eta\in\bR^m$ and $\beta>0$,
then $y^+$ can be computed in closed-form as
\begin{equation}
\label{eq:poisson-fact}
y^+_i=Q^{\beta}(\eta_i):=\big(-\eta_i+\sqrt{\eta_i^2+4\beta c_i}\big)/2,\forall i=1,\ldots, m.
\end{equation}
\end{lemma}

In other words, the only non-Lipschitz term $p(y)=\sum_{i=1}^m c_i\log(y_i)$ in the objective is indeed proximal-friendly. This simple yet powerful fact has far-reaching implications, as we shall explain next.  We therefore propose to equip the domain $U=\{u=[x,y]: x\in \bR_+^n, y\in\bR_{++}^m\}$ with the mixed setup 
$\omega(u)=\alpha \omega_x(x)+\frac{1}{2}\|y\|_2^2$ with resepct to the norm $\|u\|=\sqrt{\alpha\|x\|_x^2+\|y\|_2^2} $
for some positive number $\alpha>0$.
 \noindent
\begin{minipage}{0.46\textwidth}
$\quad\,$ Recall the definition of proximal operator  and the fact~(\ref{eq:poisson-fact}).
The composite Mirror Prox algorithm for penalized Poisson regression simplifies to Algorithm~\ref{CMPforPoisson}.
In terms of iteration cost, Algorithm~\ref{CMPforPoisson} is embarrassingly efficient, given the closed-form solutions when updating both $x$ and $y$; for the latter case, it can even be done in parallel. When it comes to the iteration complexity, the algorithm achieves an overall $O(1/t)$ rate of convergence, which is significantly better than the usual $O(1/\sqrt{t})$ rate for non-smooth optimization~\cite{blair1985problem}. 
\end{minipage}
\begin{minipage}{0.02\textwidth}
\text{}
\end{minipage}
\begin{minipage}{0.5\textwidth}
\vspace{-0.4cm}
\begin{algorithm}[H]
\caption{\textbf{CMP} for Penalized Poisson Regression }
\label{CMPforPoisson}
{\bf Input:} $x^1\in \bR^n_+, y^1\in\bR^m_{++}, \alpha, \gamma_t>0$\\[-0.3cm]
\begin{algorithmic}
\FOR{$t=1,2,\ldots, T$}
  \STATE $\hat x^t=\Prox^{\gamma_t h/\alpha}_{x^t}\big(\gamma_t(s-A^Ty^t)/\alpha\big)$
  \STATE $ y^t_i= Q^{\gamma_t}(\gamma_t(a_i^Tx^t-y^t_i)), i=1,\ldots, m$
  \STATE $x^{t+1}=\Prox^{\gamma_t h/\alpha}_{x^t}\big(\gamma_t(s-A^T\hat y^t)/\alpha\big)$
  \STATE $y^{t+1}_i=Q^{\gamma_t}(\gamma_t(a_i^Tx^t-\hat y^t_i)), i=1,\ldots, m$
\ENDFOR\\
\end{algorithmic}
{\bf Output:} $x_{T} =\frac{\sum_{t=1}^T \gamma_t \hat x^t}{\sum_{t=1}^T\gamma_t}$
\end{algorithm}
\end{minipage}

\paragraph{Convergence analysis.} Denote $f(x)=L(x)+h(x)$. Given any subset $X\subset\bR^n_+$, let  $Y[X]:=\{y: y_i=1/(a_i^Tx), i=1,\ldots,m, x\in X\}$. Clearly, $Y[X]\subset\bR_{++}^m$. We arrive at 
\begin{theorem}\label{prop:Poisson}
Assume  we have some a priori information on the optimal solution to problem in (\ref{model:problemofinterest}): a convex compact set $X_0\subset \bR^n_{+}$ containing $x_*$ and a convex compact set $Y_0\subset\bR^m_{++}$ containing $Y[X_0]$. Denote $\Theta[X_0]=\max_{x\in X_0}V_\omega(x,x^1)$ and $\Theta[Y_0]=\max_{y\in Y_0}\frac{1}{2}\|y-y^1\|_2^2$.
Let 
$$\cL= \|A\|_{x\to 2}:=\max_{x\in\bR^n_+:\|x\|_x\leq 1} \{\|Ax\|_2\}$$ and let stepsizes in Algorithm~2 satisfy  $0<\gamma_t\leq \sqrt{\alpha} \cL^{-1}$ for all $t>0$. We have
\begin{equation}
f(x_{T})-f(x_*)\leq\frac{\alpha\Theta[X_0]+\Theta[Y_0]}{{\sum}_{t=1}^T\gamma_t}
\end{equation}
 In particular, by setting $\gamma_t=\sqrt{\alpha} \cL^{-1}$ for all $t$ and $\alpha=\Theta[Y_0]/\Theta[X_0]$, one further has 
\begin{equation}\label{eq:Poissonbound}
f(x_{T})-f(x_*)\leq \frac{\sqrt{\Theta[X_0]\Theta[Y_0]}\|A\|_{x\to 2}}{T}.
\end{equation}

\end{theorem}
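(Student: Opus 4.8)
The plan is to reduce Theorem~\ref{prop:Poisson} to Lemma~\ref{lem:main} applied to the specific composite saddle point problem~(\ref{model:problemsimplified}), taking care of the subtlety that the dual domain $\bR_{++}^m$ is unbounded so the abstract bound on $\Theta[U]$ cannot be used directly. First I would identify the ingredients of~(\ref{compositeSP}) in our setting: $u_1 = x$, $u_2 = y$, $\phi(x,y) = s^Tx - y^TAx$ (bilinear, hence with Lipschitz-continuous gradient), $\Psi_1(x) = h(x)$, and $\Psi_2(y) = -\sum_i c_i\log(y_i)$ (so that $-\Psi_2$ is the concave penalty $p(y)$, and the constant $c_0$ is irrelevant for the minimax). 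With the mixed setup $\omega(u) = \alpha\omega_x(x) + \tfrac12\|y\|_2^2$ and the corresponding norm $\|u\|^2 = \alpha\|x\|_x^2 + \|y\|_2^2$, I would verify that the Mirror Prox iteration of Algorithm~\ref{CMP algorithm} instantiates exactly to Algorithm~\ref{CMPforPoisson}: the $x$-update is the proximal operator $\Prox^{\gamma_t h/\alpha}_{x^t}$ by definition, and the $y$-update, which asks to minimize $\tfrac12\|y - y^t\|_2^2 + \gamma_t\langle a_i^Tx^t\cdot, y\rangle - \gamma_t\sum_i c_i\log(y_i)$ coordinatewise, is solved in closed form by $Q^{\gamma_t}$ via Lemma~\ref{lem:simplefact} (the input $\eta_i = \gamma_t(a_i^Tx^t) - y_i^t$ arising after completing the square).

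Next I would pin down the Lipschitz constant. Since $\phi$ is bilinear, $\nabla\phi(u) = (s - A^Ty, -Ax)$, so for two points $u,u'$ one has $\|\nabla\phi(u) - \nabla\phi(u')\|_*^2 = \tfrac1\alpha\|A^T(y-y')\|_{x,*}^2 + \|A(x-x')\|_2^2$, and using $\|A\|_{x\to 2} = \|A^T\|_{2\to x,*}$ both terms are bounded by $\tfrac1\alpha\|A\|_{x\to2}^2(\alpha\|x-x'\|_x^2 + \|y-y'\|_2^2) = \tfrac1\alpha\cL^2\|u-u'\|^2$; hence $\phi$ is $(\cL/\sqrt\alpha)$-Lipschitz with respect to $\|\cdot\|$, which is why the stepsize condition reads $0 < \gamma_t \le \sqrt\alpha\,\cL^{-1}$.

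The main obstacle, and the reason for the hypothesis on $X_0,Y_0$, is that Lemma~\ref{lem:main}'s bound~(\ref{eq:accuracyforCSP}) quantifies over \emph{all} $u \in U$, and $\Theta[U]$ is infinite here. The fix is to restrict to the compact ``localizer'' $U' = X_0 \times Y_0$ and argue that the gap at the averaged iterate $(x_T, y_T)$, measured against points of $U'$, already controls $f(x_T) - f(x_*)$. Concretely, I would show: (i) for any fixed $x$, $\max_{y\in\bR_{++}^m}\Phi(x,y)$ is attained at $y_i = 1/(a_i^Tx)$ and equals $f(x) = L(x)+h(x)$ (this is precisely the Fenchel identity that produced the reformulation, run in reverse); and (ii) since $x_* \in X_0$ and $Y[X_0] \subseteq Y_0$, taking $u_1 = x_*$ in the gap bound and $u_2$ to be the maximizer of $\Phi(x_T,\cdot)$ — which lies in $Y[\{x_T\}]$; one needs $x_T\in X_0$, which holds because $X_0$ is convex and $x_T$ is a convex combination of the $\hat x^t\in X_0$ — gives $f(x_T) - f(x_*) = \overline\Phi(x_T) - \Opt(P) \le \overline\Phi(x_T, u_2^\sharp) - \underline\Phi(x_*, y_T)$ for an appropriate $u_2^\sharp\in Y_0$, hence $\le \Theta[U']/\sum_t\gamma_t$ by the (localized form of the) Mirror Prox estimate. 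Then $\Theta[U'] = \max_{u\in X_0\times Y_0}V_\omega(u,u^1) = \alpha\Theta[X_0] + \Theta[Y_0]$ by separability of $\omega$, giving the first displayed bound. Finally, plugging $\gamma_t = \sqrt\alpha\,\cL^{-1}$ yields $\sum_t\gamma_t = T\sqrt\alpha/\cL$ and the bound becomes $(\alpha\Theta[X_0] + \Theta[Y_0])\cL/(T\sqrt\alpha) = (\sqrt\alpha\,\Theta[X_0] + \Theta[Y_0]/\sqrt\alpha)\cL/T$; optimizing the scalar $\alpha$ by AM–GM, i.e. $\alpha = \Theta[Y_0]/\Theta[X_0]$, makes the two terms equal and produces~(\ref{eq:Poissonbound}). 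The one technical point to handle with care is that the localized version of Lemma~\ref{lem:main} — gap controlled by $\Theta[U']$ for a compact $U' \ni$ both the comparison point and the running "optimal response" — is the standard Mirror Prox telescoping argument restricted to $U'$; I would either cite it from~\cite{he2015mirror} or note that the proof of~(\ref{eq:accuracyforCSP}) only ever uses $V_\omega(u,u^1)$ for the single comparison point $u$, so replacing $\Theta[U]$ by $\Theta[U']$ is immediate once $u$ ranges over $U'$ only.
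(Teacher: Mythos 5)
Your proposal takes exactly the route the paper intends: the paper's own proof (Appendix~\ref{sec:appendix_PoissonReg}) consists of the single remark that the saddle point reformulation falls into the composite setting of Section~\ref{sec:preliminary} and that Theorem~\ref{prop:Poisson} follows by ``invoking Lemma~\ref{lem:main} with the specific mixed proximal setups,'' and you have filled in precisely the omitted details — the identification of $\phi,\Psi_1,\Psi_2$, the computation of the Lipschitz constant $\cL/\sqrt{\alpha}$ for the mixed norm (which explains the stepsize condition $\gamma_t\le\sqrt{\alpha}\,\cL^{-1}$), the localization of the gap bound to the compact set $X_0\times Y_0$, and the AM--GM optimization over $\alpha$. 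All of this is correct and is the right reading of the (terse) intended argument.

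One caveat: your justification that $x_T\in X_0$ ``because $X_0$ is convex and $x_T$ is a convex combination of the $\hat x^t\in X_0$'' asserts more than the algorithm guarantees. Algorithm~\ref{CMPforPoisson} never projects onto $X_0$ (Remark~1 states explicitly that $X_0$ is not involved in the computations), so there is no reason the iterates $\hat x^t$ lie in $X_0$, and hence no reason $\hat y(x_T)\in Y[X_0]\subseteq Y_0$. What the argument actually needs is only that the single comparison point $[x_*;\hat y(x_T)]$ has $V_\omega$-distance to $u^1$ bounded by $\alpha\Theta[X_0]+\Theta[Y_0]$; this is where the a priori sets enter, and strictly speaking one must either assume $Y_0$ contains the optimal dual responses along the trajectory or enlarge $X_0$ accordingly. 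This gap is inherited from the paper's own statement rather than introduced by you, but your proof should not paper over it with the false claim that the iterates stay in $X_0$.
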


\paragraph{Remark 1.} Note that Algorithm~\ref{CMPforPoisson} works without requiring $a_i^Tx>0,\forall i$, or any global Lipschitz continuity of the original objective function. The sets $X_0$ and $Y_0$ only appear in the theoretical guarantee yet are never involved in the computations along the iterations of  Algorithm~\ref{CMPforPoisson}. 

Furthermore, one can easily get candidate sets $X_0$ and $Y_0$\footnote{Knowing the geometry of such set could also help us determine favorable proximal setups, which we will illustrate in Sec.~\ref{sec:application}.} In principle, we can at least say that
\begin{equation}
\begin{array}{c}
X_0=\{x\in\bR_+^n: s^Tx+h(x)\leq \sum_{i=1}^m c_i\}.
\end{array}
\end{equation}
Clearly, $X_0$ is convex and compact. The reason why $x_*\in X_0$ is due to the following fact.  See Appendix~\ref{sec:appendix_PoissonReg} for proof. 
\begin{proposition}\label{prop:domain}
The optimal solution $x_*$ to the problem in (\ref{model:problemofinterest}) satisfies
$
 s^Tx_*+h(x_*)= \sum_{i=1}^m c_i.
$
\end{proposition}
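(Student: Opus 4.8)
The plan is to exploit first-order optimality conditions at $x_*$ together with the homogeneity assumption on $h$, mimicking an Euler-identity argument. First I would write down the optimality condition for the convex program $\min_{x\in\bR_+^n} L(x)+h(x)$: since $x_*$ is optimal, there exists a subgradient $g\in\partial h(x_*)$ such that $\langle \nabla L(x_*)+g,\, x-x_*\rangle\geq 0$ for all $x\in\bR_+^n$. Taking $x=0$ and $x=2x_*$ (both feasible, assuming $x_*$ lies in the domain where $L$ is differentiable, i.e. $a_i^Tx_*>0$ for all $i$ with $c_i>0$) forces the equality $\langle \nabla L(x_*)+g,\, x_*\rangle = 0$, i.e.
\begin{equation*}
\langle \nabla L(x_*), x_*\rangle + \langle g, x_*\rangle = 0.
\end{equation*}

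Next I would evaluate the two inner products explicitly. For the smooth part, $\nabla L(x) = s - \sum_{i=1}^m \frac{c_i}{a_i^Tx}a_i$, so $\langle \nabla L(x_*), x_*\rangle = s^Tx_* - \sum_{i=1}^m c_i \frac{a_i^Tx_*}{a_i^Tx_*} = s^Tx_* - \sum_{i=1}^m c_i$. For the penalty term, homogeneity of $h$ gives the Euler relation $\langle g, x_*\rangle = h(x_*)$ for any $g\in\partial h(x_*)$: indeed, from $h(\lambda x_*)=\lambda h(x_*)$ for $\lambda\geq 0$ and convexity, $h(\lambda x_*)\geq h(x_*)+(\lambda-1)\langle g,x_*\rangle$, which upon letting $\lambda\to\infty$ and $\lambda\to 0$ pins down $\langle g,x_*\rangle = h(x_*)$. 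Substituting both evaluations into the optimality equality yields $s^Tx_* - \sum_{i=1}^m c_i + h(x_*) = 0$, which is exactly the claim.

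The main obstacle is the technical point that $L$ (hence $\nabla L$) is only defined, let alone differentiable, where $a_i^Tx_*>0$ for every $i$ with $c_i>0$; one must argue that an optimal $x_*$ does indeed lie in the (relative) interior of the effective domain, so that the first-order condition with an honest gradient of $L$ makes sense. This follows because $L(x)\to+\infty$ as $a_i^Tx\downarrow 0$ (the $-c_i\log(a_i^Tx)$ term blows up) while the objective is finite at interior points, so $x_*$ cannot sit on that part of the boundary; the nonnegativity constraints $x\geq 0$ are handled by the variational-inequality form of optimality rather than by plain stationarity. A small amount of care is also needed if one prefers to treat the nonnegativity constraint directly: the choices $x=0$ and $x=2x_*$ are both feasible, so the sandwiching argument that produces the equality $\langle\nabla L(x_*)+g,x_*\rangle=0$ goes through without invoking any constraint qualification.
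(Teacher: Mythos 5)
Your proof is correct and rests on the same idea as the paper's: the behavior of the objective under the scaling $x_*\mapsto tx_*$ together with the homogeneity of $h$. The paper executes it more simply by restricting the objective to the ray, $\phi(t):=L(tx_*)+h(tx_*)=t(s^Tx_*+h(x_*))-\sum_{i=1}^m c_i\log(a_i^Tx_*)-\log(t)\sum_{i=1}^m c_i$, and setting $\phi'(1)=0$; your sandwich with $x=0$ and $x=2x_*$ plus the Euler identity $\langle g,x_*\rangle=h(x_*)$ is exactly the multivariate unpacking of that one-line computation, at the cost of invoking subdifferential calculus and the domain/qualification caveats you rightly flag.
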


\paragraph{Remark 2.} Theorem~\ref{prop:domain} implies that the performance of Algorithm~\ref{CMPforPoisson} is essentially determined by the distance between the initial solution $(x^1,y^1)$ to the optimal solution $(x_*,y_*)$. Therefore, if the initial solution is close enough to the optima, then one can expect the algorithm to converge quickly. In practice, the optimal choice of $\alpha =\frac{1}{2}\|y^1-y_*\|_2^2/V_\omega(x_*,x^1)$ is often unknown. One can instead select $\alpha$ from empirical considerations, for instance by treating $\alpha$ as a hyper-parameter and tuning it using cross-validation. 

\section{Randomized Block Mirror Prox for Large-Scale Applications} \label{sec:extension}
While the saddle point reformulation eliminates the non-Lipschitz continuity suffered by the original problem, it also requires the introduction of $m$ dual variables, where $m$ equals the number of datapoints. Hence, Algorithm~2 is mostly appropriate for applications with reasonably large samples. Tackling extremely large-sample datasets requires additional computation and memory cost.  

We propose a randomized block-decomposition variant of composite Mirror Prox, that is appropriate for large-sample datasets. Block-coordinate optimization has received much attention and success recently for solving high-dimensional convex minimization problems; see~\cite{ nesterov2012efficiency, Shalev-Shwartz:2013, lu2013complexity, richtarik2014iteration, dang2015sbmd} and reference therein. However, to the best of our knowledge, 
this is the first time that a randomized block-coordinate variant of Mirror Prox is developed to solve saddle point problems and a more general class of variational inequalities.  

\noindent
\begin{minipage}{0.45\textwidth}
$\quad\,$ For the sake of simplicity, here we only present the algorithm customized specifically for the Poisson likelihood models of our interest and leave the general results on variational inequalities in appendix for interested readers. In Appendix~\ref{sec:appendix_RBMP} and \ref{sec:appendix_partRBMP}, we introduce CMP algorithm with  fully randomized and partially randomized updating rules, respectively, and provide detailed convergence analysis. We emphasize that the randomized block Mirror Prox algorithm shares some similarity with few existing works based on primal-dual schemes, e.g. the SPDC algorithm \cite{zhang2015stochastic} and the RPD algorithm\cite{dang2014randomized} when applying to saddle point problems, but they are algorithmically different, as previously discussed in Sec.~\ref{sec:preliminary}. \\
\end{minipage}
\begin{minipage}{0.02\textwidth}
\text{}
\end{minipage}
\begin{minipage}{0.48\textwidth}
\vspace{-0.55cm}
\begin{algorithm}[H]
\caption{Randomized Block Mirror Prox\\ ({\bf RB-CMP}) for Penalized Poisson Regression }
\label{alg:randomizedCMP}
{\bf Input:} $x^1\in \bR^n_+, y^1\in\bR^n_{++}, \gamma_t>0$\\[-0.3cm]
\begin{algorithmic}
\FOR{$t=1,2,\ldots, T$}
  \STATE Randomly pick a block $k_t\in\{1,2,\ldots,b\}$
  \STATE $\begin{array}{rl}
&\hat x^t=\Prox^{\gamma_t h}_{x^t}\big(\gamma_t(s-A^Ty^t)\big)\\
\vspace{2mm}
&\hat y^t_k=\left\{ \begin{array}{ll}
Q^{\gamma_t}(\gamma_t(A_kx^t-y^t_k)),&\;\; k=k_t\\
x^t_k, &\;\; k\neq k_t
\end{array}\right.
\end{array}$
 \STATE $\begin{array}{rl}
&x^{t+1}=\Prox^{\gamma_t h}_{x^t}\big(\gamma_t(s-A^T\hat y^t)\big)\\
\vspace{2mm}
&y^{t+1}_k=\left\{ \begin{array}{ll}
Q^{\gamma_t}(\gamma_t(A_kx^t-y^t_k)),& k=k_t\\
x^t_k, & k\neq k_t
\end{array}\right.
\end{array}$
\ENDFOR\\
\end{algorithmic}
{\bf Output:} $x_{T} =\frac{\sum_{t=1}^T \gamma_t \hat x^t}{\sum_{t=1}^T\gamma_t}$, $y_{T} =\frac{\sum_{t=1}^T \gamma_t \hat y^t}{\sum_{t=1}^T\gamma_t}$
\end{algorithm}
\end{minipage}

With a slight abuse of notation, let us denote $y=[y_1;\ldots;y_b]$ and $A=[A_1;\ldots;  A_b]$, where $y_k\in\bR^{m_k}, A_k\in\bR^{m_k\times n}, k=1,\ldots,b$  such that $m_1+m_2+\ldots+m_b=m$.  The randomized block Mirror Prox algorithm tailored to solve (\ref{model:problemsimplified})  is described  in Algorithm \ref{alg:randomizedCMP}.

\begin{theorem}\label{prop:RandomizedCMP}
Let $\cL_k:=\max_{x\in\bR^n_+:\|x\|_x\leq 1} \{\|A_kx\|_2\}$ and stepsizes in Algorithm \ref{alg:randomizedCMP} satisfy  $0<\gamma_t\leq 1/\cL_M:=\min_{k=1,\ldots, b}\{(\sqrt{2b} \cL_k)^{-1}\}$ for all $t>0$. Denote $\Phi(x,y)$ as the saddle function in (\ref{model:problemsimplified}), then for any $x\in X_0, y\in Y_0$. Under same assumptions as in Theorem~\ref{prop:Poisson}, we have
\begin{equation}
\bE[\Phi(x_{T}, y)-\Phi(x,y_{T})]\leq\frac{\Theta[X_0]+b\cdot \Theta[Y_0]}{{\sum}_{t=1}^T\gamma_t}.
\end{equation}
In particular, when setting $\gamma_t\equiv 1/\cL_M, \forall t>0$, we have for any $x\in X_0, y\in Y_0$,
\begin{equation}
\bE[\Phi(x_{T}, y)-\Phi(x,y_{T})]\leq\frac{(\Theta[X_0]+b\cdot \Theta[Y_0])\cL_M}{T}.
\end{equation}
\end{theorem}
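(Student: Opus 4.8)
The plan is to read Algorithm~\ref{alg:randomizedCMP} as a randomized block-coordinate instance of composite Mirror Prox (Algorithm~\ref{CMP algorithm}) run on the composite saddle point problem (\ref{model:problemsimplified}), with a proximal setup in which the dual distance-generating function is inflated by the number of blocks $b$. Writing $\phi(x,y)=s^Tx-y^TAx$ for the smooth bilinear part, $\Psi_1(x)=h(x)$, $\Psi_2(y)=-\sum_i c_i\log y_i$, and $F(x,y)=(s-A^Ty;\,Ax)$ for the associated monotone operator, (\ref{model:problemsimplified}) falls under the template of Section~\ref{sec:preliminary}. I would keep $(\|\cdot\|_x,\omega_x)$ on $X=\bR^n_+$ and equip $Y=\bR^m_{++}$ with the block-separable d.g.f.\ $\tfrac b2\|y\|_2^2=\sum_{k=1}^b\tfrac b2\|y_k\|_2^2$; by Lemma~\ref{lem:simplefact} the prox-mapping of $\Psi_2$ restricted to a block $y_k$ is still closed-form (the factor $b$ only rescales the parameter in $Q^{(\cdot)}$), so the recursion in Algorithm~\ref{alg:randomizedCMP} is exactly the generic Mirror Prox step carried out on the $x$-variable and on the single block $y_{k_t}$, with the other blocks frozen at $\hat y^t_k=y^t_k=y^{t+1}_k$, $k\neq k_t$.

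The core is a one-step estimate in conditional expectation. Let $\cF_{t-1}$ be the history before iteration $t$ and $\bE_t[\cdot]=\bE[\cdot\mid\cF_{t-1}]$; note that $\hat x^t$ is $\cF_{t-1}$-measurable while $\hat y^t$ differs from $y^t$ only in the randomly drawn block $k_t$. Applying the two prox-inequalities that underlie Algorithm~\ref{CMP algorithm} (cf.\ \cite{he2015mirror} and the proof of Lemma~\ref{lem:main}) to the $x$-update and to the block-$k_t$ update of $y$, bounding the two bilinear coupling terms by Cauchy--Schwarz together with $\|A_{k_t}x\|_2\le\cL_{k_t}\|x\|_x$ and Young's inequality, and averaging over the uniform choice of $k_t$ --- which turns a single-block move into a $1/b$-fraction of a full Mirror Prox move and thereby matches the $b$-fold inflation of the dual d.g.f.\ --- one arrives at a recursion of the form
\[
\gamma_t\,\bE_t\langle F(\hat u^t),\hat u^t-u\rangle\le D_t(u)-\bE_t D_{t+1}(u),\qquad D_t(u):=V_{\omega_x}(x,x^t)+\tfrac b2\|y-y^t\|_2^2,
\]
valid for every $u=(x,y)\in U$ with $\hat u^t=(\hat x^t,\hat y^t)$, the residual error terms being nonpositive provided $\gamma_t\le(\sqrt{2b}\,\cL_{k_t})^{-1}$ --- guaranteed by the assumption $\gamma_t\le1/\cL_M$ --- where the $b$ comes from the inflated dual d.g.f.\ and the extra $\sqrt2$, as in Theorem~\ref{prop:Poisson}, from having to dominate both the $x\!\to\!y$ and the $y\!\to\!x$ halves of the single bilinear coupling.

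Summing over $t=1,\dots,T$, taking total expectation and telescoping, $\sum_{t=1}^T\gamma_t\,\bE\langle F(\hat u^t),\hat u^t-u\rangle\le D_1(u)-\bE D_{T+1}(u)\le V_{\omega_x}(x,x^1)+\tfrac b2\|y-y^1\|_2^2$, which for $u\in X_0\times Y_0$ is at most $\Theta[X_0]+b\,\Theta[Y_0]$ by the definitions of $\Theta[X_0],\Theta[Y_0]$ and nonnegativity of the Bregman distances. On the other side, monotonicity of $F$ and the convexity--concavity of $\Phi$ (the composite terms $h$ and $p$ being absorbed exactly as in Lemma~\ref{lem:main}) give, for the averaged iterates $x_T,y_T$ output by the algorithm and any $u=(x,y)$, $\big(\sum_{t=1}^T\gamma_t\big)\big(\Phi(x_T,y)-\Phi(x,y_T)\big)\le\sum_{t=1}^T\gamma_t\langle F(\hat u^t),\hat u^t-u\rangle$. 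Combining the two displays, dividing by $\sum_t\gamma_t$, and using that $x\in X_0,y\in Y_0$ were arbitrary yields the first bound; putting $\gamma_t\equiv1/\cL_M$, so that $\sum_t\gamma_t=T/\cL_M$, yields the second. The hypotheses inherited from Theorem~\ref{prop:Poisson} ($x_*\in X_0$ and $Y[X_0]\subseteq Y_0$) are used only to guarantee that the saddle point of $\Phi$ lies in $X_0\times Y_0$, so that this gap estimate is meaningful and, as in the deterministic case, controls $\bE[f(x_T)]-f(x_*)$.

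\textbf{Main obstacle.} The delicate point is the one-step estimate. Because the algorithm freezes the $b-1$ unselected blocks, the operator value $F(\hat u^t)$ entering the $x$-correction $x^{t+1}=\Prox^{\gamma_t h}_{x^t}(\gamma_t(s-A^T\hat y^t))$ is evaluated at a dual point that is only partially updated; the frozen blocks contribute terms to $\langle F(\hat u^t),\hat u^t-u\rangle$ that are not controlled pathwise, so one genuinely has to pass to the conditional expectation over $k_t$, and one has to verify that the variance of updating $1$ of $b$ blocks is exactly compensated by the $b$-fold inflation of the dual d.g.f.\ --- this is what simultaneously produces the $b\,\Theta[Y_0]$ term and the $\sqrt{2b}$ factor in the stepsize. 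Pinning down these constants, rather than an absolute-constant version, is the bookkeeping-heavy part; a cleaner route, which I would also pursue, is to first establish a general randomized block-coordinate Mirror Prox theorem for monotone variational inequalities and then instantiate it here with the block Lipschitz constants $\cL_k=\|A_k\|_{x\to2}$.
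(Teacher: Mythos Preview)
Your plan is essentially the paper's own: it proves a general partially-randomized block Mirror Prox theorem for variational inequalities (block $x$ always updated, one dual block $y_{k_t}$ updated) using the three-point identity, block Lipschitz constants, and the aggregate Bregman distance $V_0(x,x^t)+b\sum_{k}V_k(y_k,y_k^t)$---precisely your ``$b$-fold inflation of the dual d.g.f.''---then specializes to (\ref{model:problemsimplified}) with $L_0=L_k=0$ and $G_k=\cL_k=\|A_k\|_{x\to2}$, which yields exactly the stepsize bound $\gamma_t\le(\sqrt{2b}\,\cL_k)^{-1}$ and the numerator $\Theta[X_0]+b\,\Theta[Y_0]$.

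One correction to your closing sentence: the saddle-point gap bound here does \emph{not} control $\bE[f(x_T)]-f(x_*)$ the way Lemma~\ref{lem:main} does in the deterministic case, because the expectation is outside the supremum over $y$; the paper explicitly flags this (Remark~3) as an open issue for randomized saddle-point methods, so you should not claim it.
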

\paragraph{Remark 3.} A full description of the algorithm and convergence analysis is presented in Appendix~\ref{sec:appendix_partRBMP}. Similar to its full batch version, the randomized block Mirror Prox algorithm also enjoys the $O(1/t)$ convergence rate, but with relatively cheaper iteration cost. 
The above error bound does not necessarily imply $\bE[f(x_T)-f_*]\leq O(1/T)$, as one would wish to have. Indeed, establishing such a result is notoriously difficult in general when considering randomized algorithms for general saddle point problems, as emphasized in~\cite{dang2014randomized}. We shall therefore leave this for future investigation. 

\section{Experiments: Learning and Inferences on Diffusion Networks}\label{sec:application}
In this section, we present illustrations of the proposed approaches as applied to two emerging applications in estimating point processes introduced in the beginning. Due to the space limit, we put our experimental results on Poisson imaging in the Appendix~\ref{sec:appendix_PET}. Detailed algorithms tailored for each model and  experimental setups can also be found in Appendix~\ref{sec:appendix_network}.   
 
\vspace{-0.3cm}
\subsection{Social network estimation} 
Given a sequence of events $\{(u_j,t_j)\}_{j=1}^m$, the goal is to estimate the influence matrix among users. We focus on the convex formulation as posed in \cite{zhou2013learning} 
\begin{equation}\label{model:plainHawkes}
\begin{array}{c}
\min_{x\geq 0, X\geq 0} L(x,X) +\lambda_1\|X\|_1+\lambda_2\|X\|_\nuc
\end{array}
\end{equation}
$L(x,X):= \sum_{u=1}^U [Tx_u+\sum_{j=1}^m X_{uu_j}G(T-t_j)]
-\sum_{j=1}^m\log\big(x_{u_j}+\sum_{k:t_k<t_j}X_{{u_j}{u_k}}g(t_j-t_k)\big)$ is the negative log-likelihood term, $\|\cdot\|_\nuc$ is  nuclear norm.  We first focus on the case $\lambda_2=0$, since our purpose here is to investigate the effect of non-Lipschitz continuity. 

\paragraph{Experimental setup.} We compare the proposed composite Mirror Prox (CMP) and its randomized block variant (RB-CMP) to Mirror Descent (MD)for compositive objective ~\cite{duchi2010composite} both on synthetic and real Twitter datasets.  The synthetic dataset consists of 50 users and 50,000 events. The Twitter dataset consists of 100 users and 98,927 events. Based on Proposition~\ref{prop:domain}, we can see that the optimal solution is contained in some bounded simplex. Hence, for all algorithms, we use  entropy distance generating function,  i.e. $\omega_x(x)=\sum_{u} x_u\log(x_u)$ and $\omega_X(X)=\sum_{u,u'}X_{uu'}\log(X_{uu'})$.  Therefore, our algorithms give multiplicative updates for $x$ and $X$. 

\paragraph{Numerical results.}  We run the three algorithms with their best-tuned parameters \footnote{For MD and RB-CMP, we tune the stepsize using cross-validation; for CMP, the stepsize is self-tuned via line-search.},  respectively, under different regularization parameters $\lambda_1\in\{0.01, 1, 100\}$. We evaluate their relative sub-optimality $\frac{f(x_t)-f_*}{f(x_1)-f_*}$ vs number of effective passes through data, where $f$ is the overall objective and $f_*$ is estimated by running the best algorithm long enough. The results are presented in Figure~\ref{fig:network}, which demonstrate that our composite Mirror Prox algorithm performs significantly and consistently better than Mirror Descent, and the randomized block variant further improves the performance especially on  large real-world datasets.

\begin{figure*}[!ht]
\begin{center}
  \begin{subfigure}[t]{.3\textwidth}
    \includegraphics[width=\textwidth]{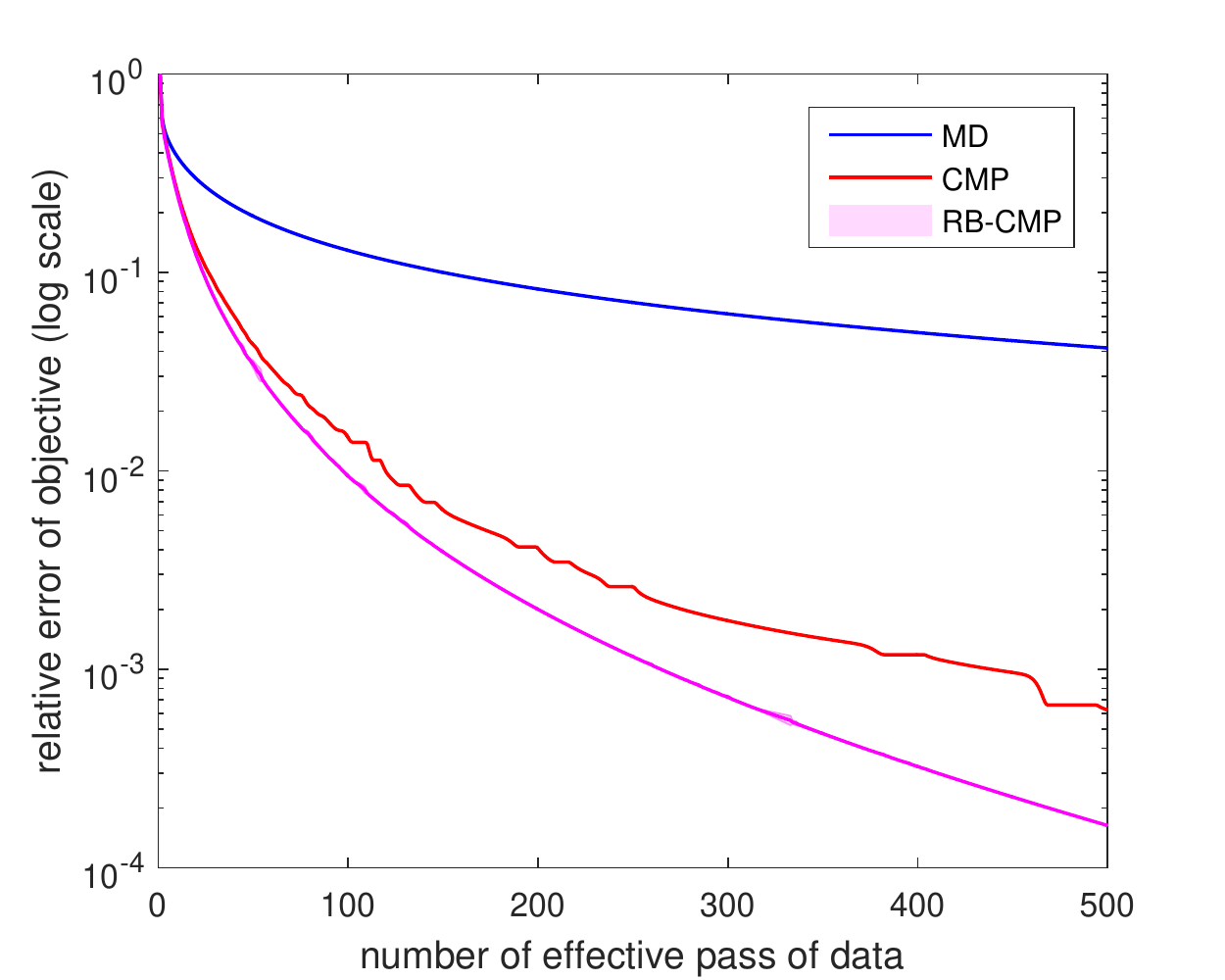}
    \caption{synthetic dataset, $\lambda_1=0.01$}
  \end{subfigure}%
  \begin{subfigure}[t]{.3\textwidth}
    \includegraphics[width=\textwidth]{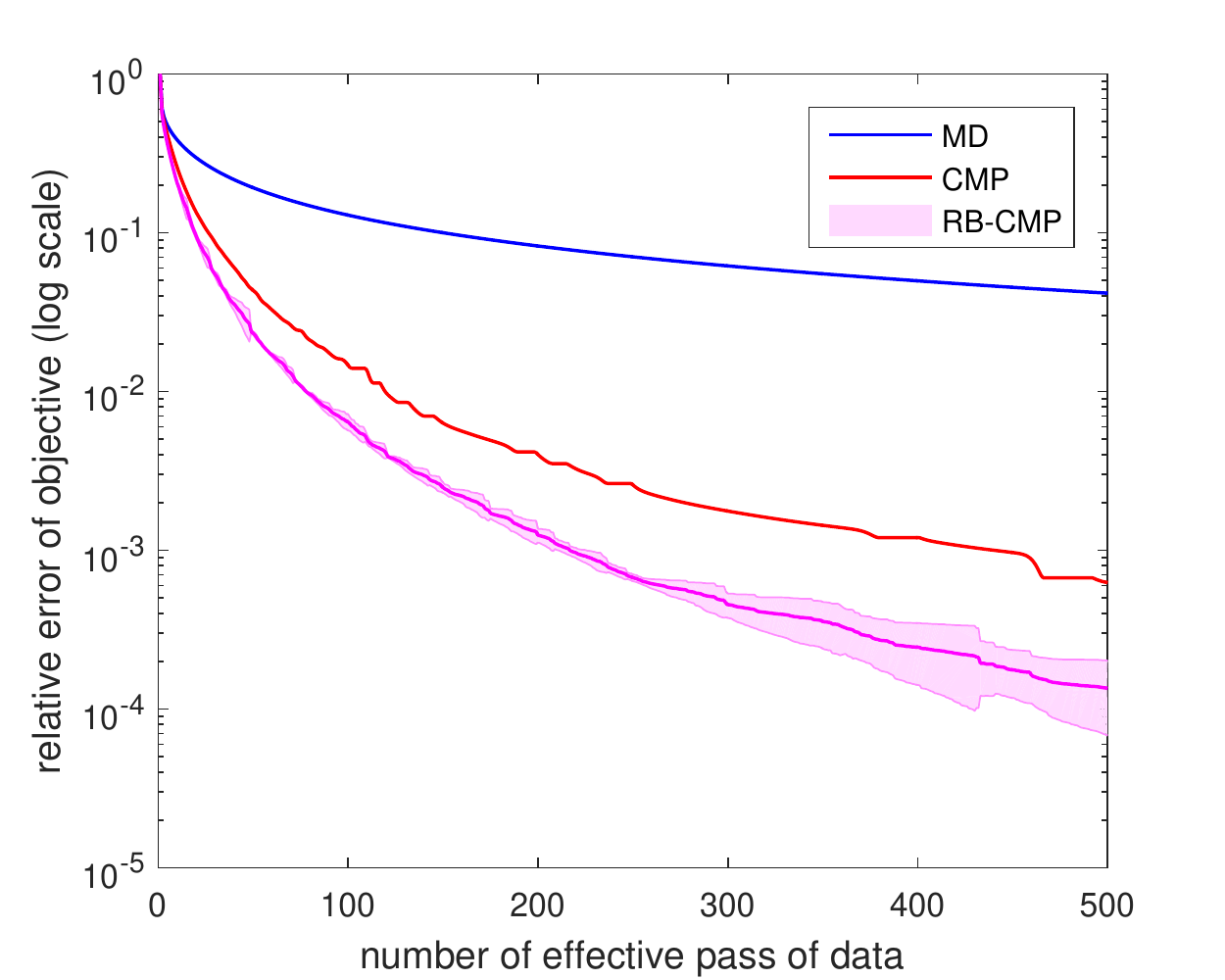}
    \caption{synthetic dataset, $\lambda_1=1$}
  \end{subfigure}%
  \begin{subfigure}[t]{.3\textwidth}
    \includegraphics[width=\textwidth]{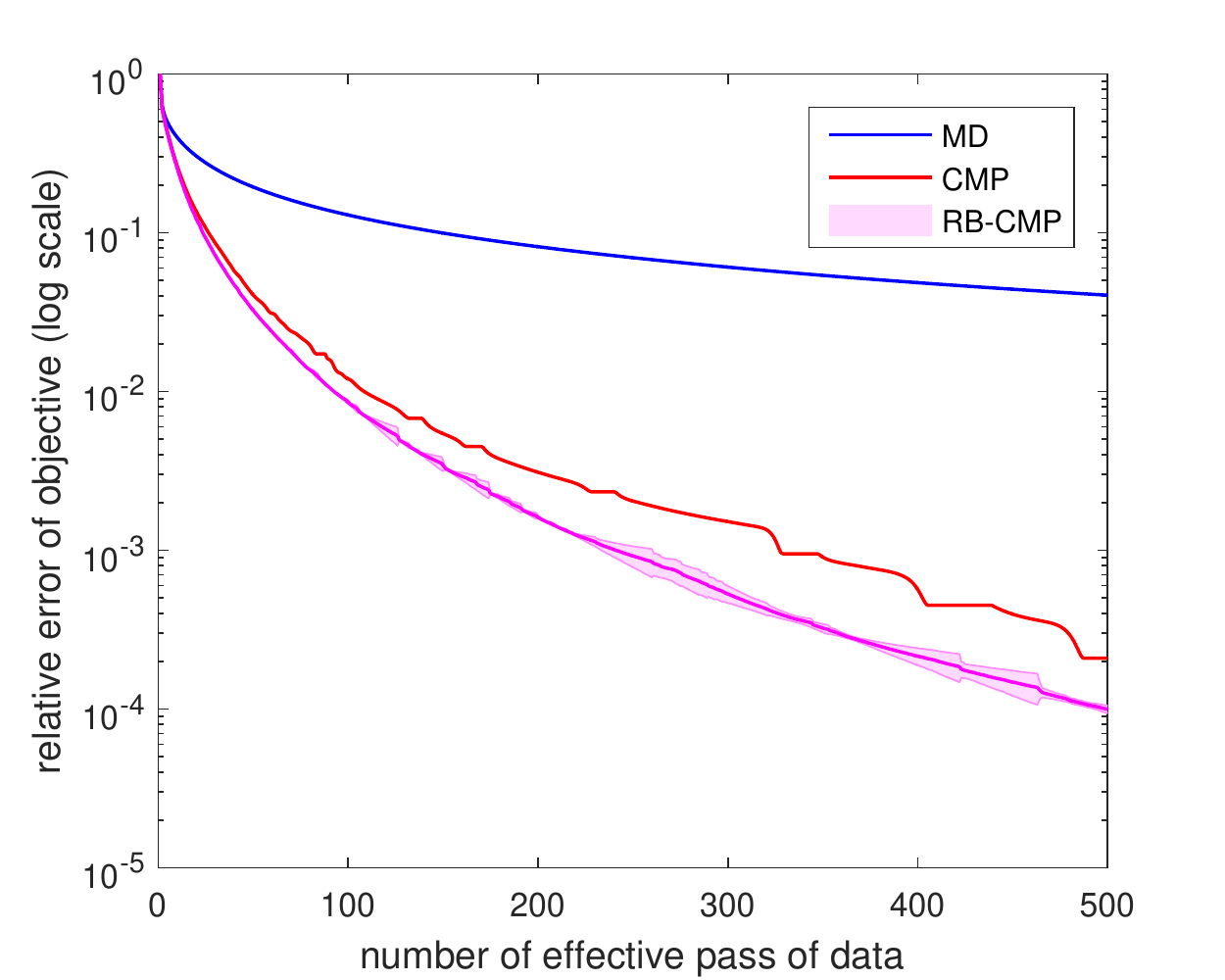}
     \caption{synthetic dataset, $\lambda_1=100$}
  \end{subfigure}%
  
    \begin{subfigure}[t]{.3\textwidth}
    \includegraphics[width=\textwidth]{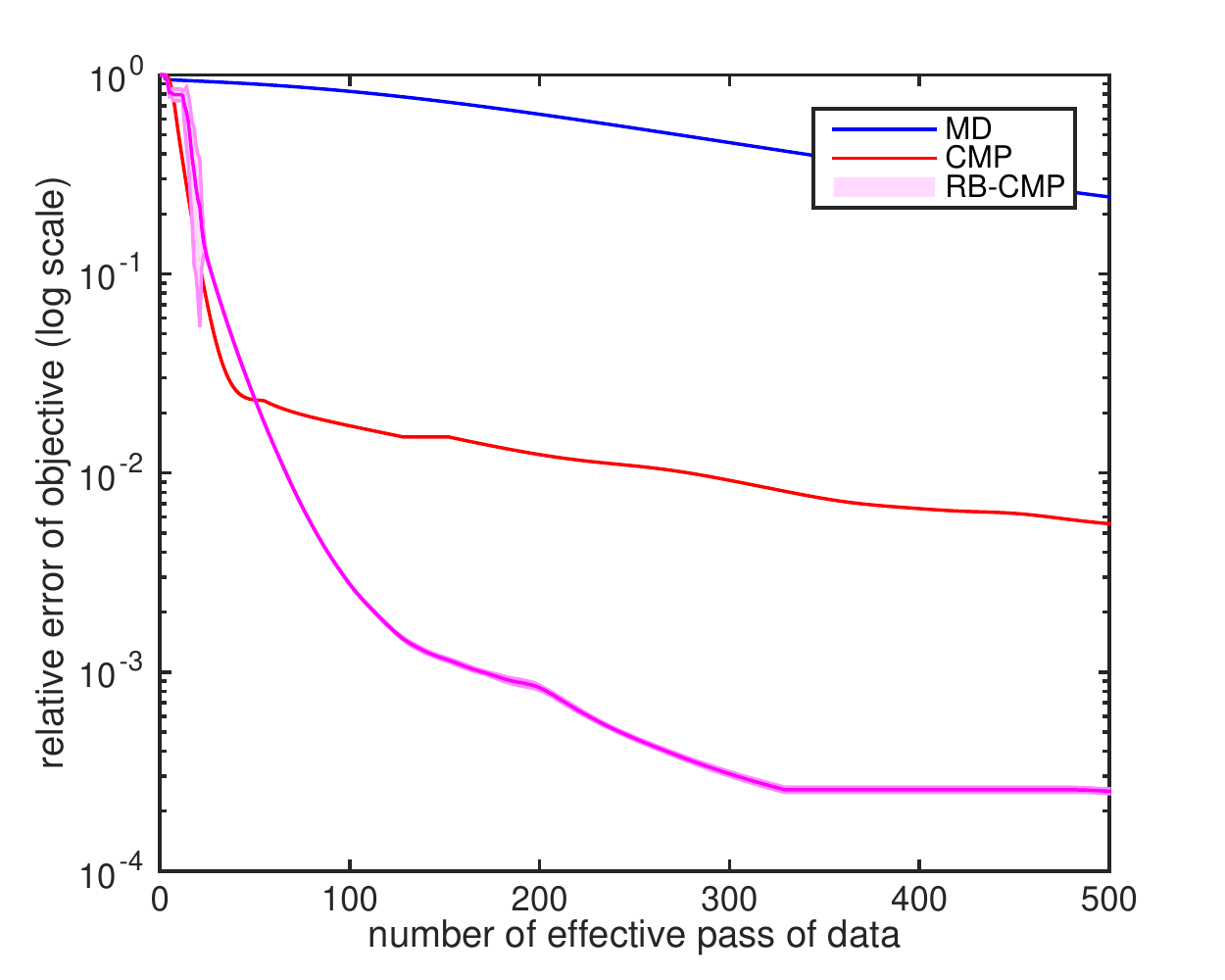}
    \caption{Twitter dataset, $\lambda_1=0.01$}
  \end{subfigure}%
  \begin{subfigure}[t]{.3\textwidth}
  \includegraphics[width=\textwidth]{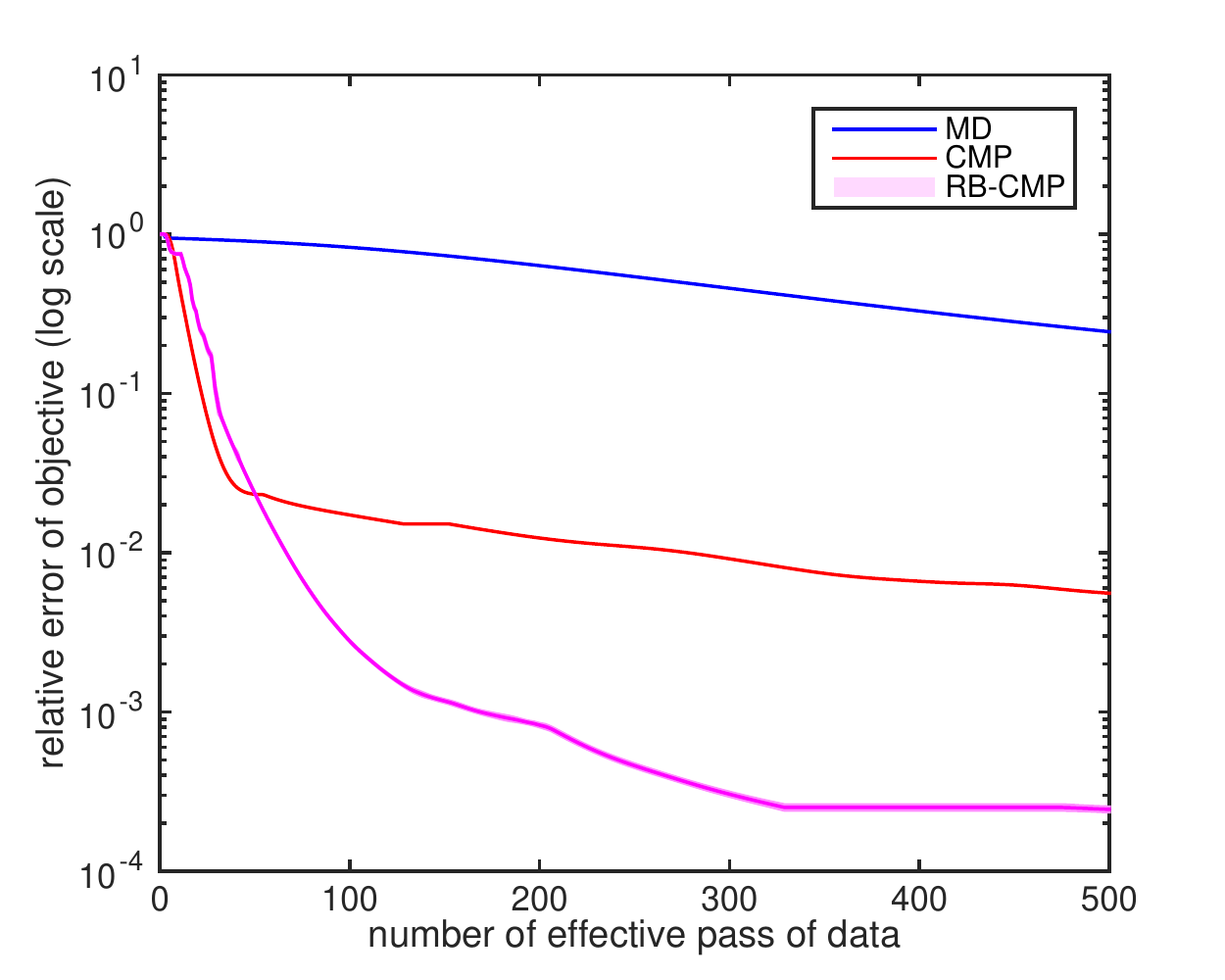}
    \caption{Twitter  dataset, $\lambda_1=1$}
  \end{subfigure}%
  \begin{subfigure}[t]{.3\textwidth}
    \includegraphics[width=\textwidth]{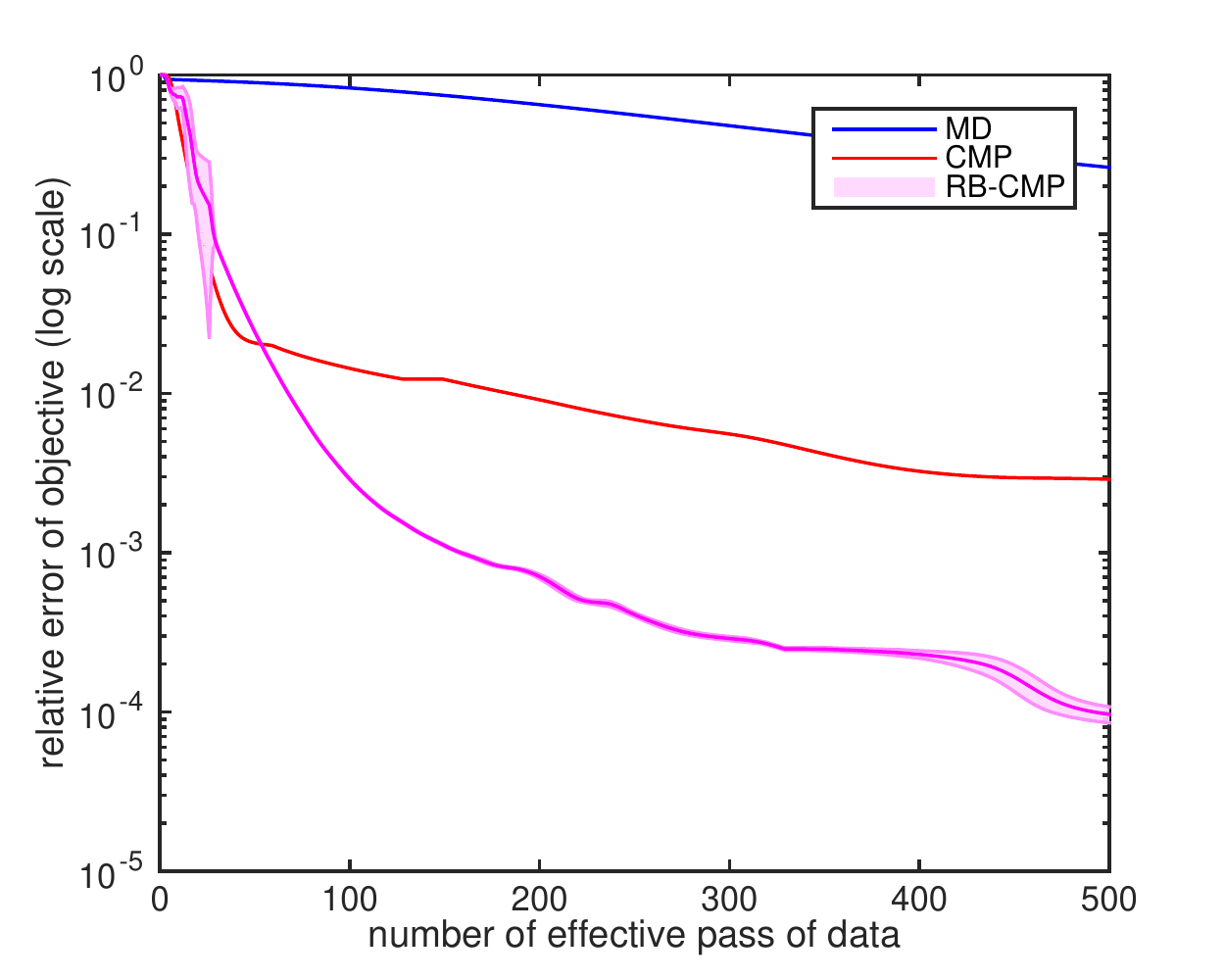}
     \caption{Twitter  dataset, $\lambda_1=100$}
  \end{subfigure}%
  \caption{Social network estimation \\[-0.5cm]}
  \label{fig:network}
  \end{center}
\end{figure*}

\subsection{Temporal recommendation system} 
Given a sequence of events $\{\cT^{u,i}\}_{u,i}$ for each user-item pair $(u,i)$, we consider the alternative convex reformulation posed in \cite{Nan15} for modeling the temporal behaviors of user-item preference,
\begin{equation}\label{model:penalized2}
\begin{array}{c}
\min\limits_{X_1\geq0, X_2\geq 0, Y_1,Y_2} \;L(X_1,X_2)+\lambda_1\|Y_1\|_\nuc+\lambda_2\|Y_2\|_\nuc+\rho \|X_1-Y_1\|^2_2+\rho \|X_2-Y_2\|^2_2
\end{array}
\end{equation}
where the negative log-likelihood term is  $L(X_1,X_2)= \sum_{u,i}\big[TX_1^{u,i}+\sum_{t_j\in \cT^{u,i} }[X_2^{u,i}G(T-t_j)-\log(X_1^{u,i}+X_2^{u,i}\sum_{t_k<t_j}g(t_j-t_k))]\big].$  Matrices $X_1$ and $X_2$ denote the base intensity and self-exciting coefficients for all user-item pair, variables $Y_1$ and $Y_2$ are copies of $X_1$ and $X_2$.\\
 
 \noindent
\begin{minipage}{0.4\textwidth}
\paragraph{Experimental setup.} We compare CMP to serveral algorithms including Mirror Descent (MD) \cite{duchi2010composite}, proximal gradient descent (PG)  and  accelerated proximal gradient (APG)~\cite{NesCompMin, Nan15}. The stepsizes of PG and APG are selected adaptively since the objective is non-globally Lipschitz continuous. 
\end{minipage}
\begin{minipage}{0.02\textwidth}
\text{}
\end{minipage}
\begin{minipage}{0.5\textwidth}
{\small
\begin{table}[H]
\vspace{-0.5cm}
\begin{center}
\caption{Datasets for temporal recommendation}
\label{table}
\begin{tabular}{|c|c|c|c|c|}
\hline
dataset &user & item  & pair & event\\
\hline\hline
synthetic &64 &  64 & 2048 & 2048000\\
\hline
last.fm (small) &297&  423 & 492 & 31353\\
\hline
last.fm (medium) &568&1162 & 1822 &127724\\
\hline
last.fm (large)&727&2247 & 6737 &454375\\
\hline
\end{tabular}
\end{center}
\end{table}
}
\end{minipage}

\paragraph{Numerical results.} We run the experiments on both synthetic can real-world datasets as described in Table~\ref{table}. The number of events in the \textit{last.fm} dataset ranges from 30,000 to 500,000. We set the regularization parameters to be the same and range from $\{0.1,1,10\}$. The results are presented in Figure~\ref{fig:RecSys}.  Figure~\ref{fig:RecSys} clearly indicates that i) using non-Euclidean setup significantly improves the performance ii) our algorithm performs considerably better than  Mirror Descent and even accelerated proximal gradient method in practice. \\[-0.5cm]

\begin{figure*}[!ht]
\begin{center}
  \begin{subfigure}[t]{.3\textwidth}
    \includegraphics[width=\textwidth]{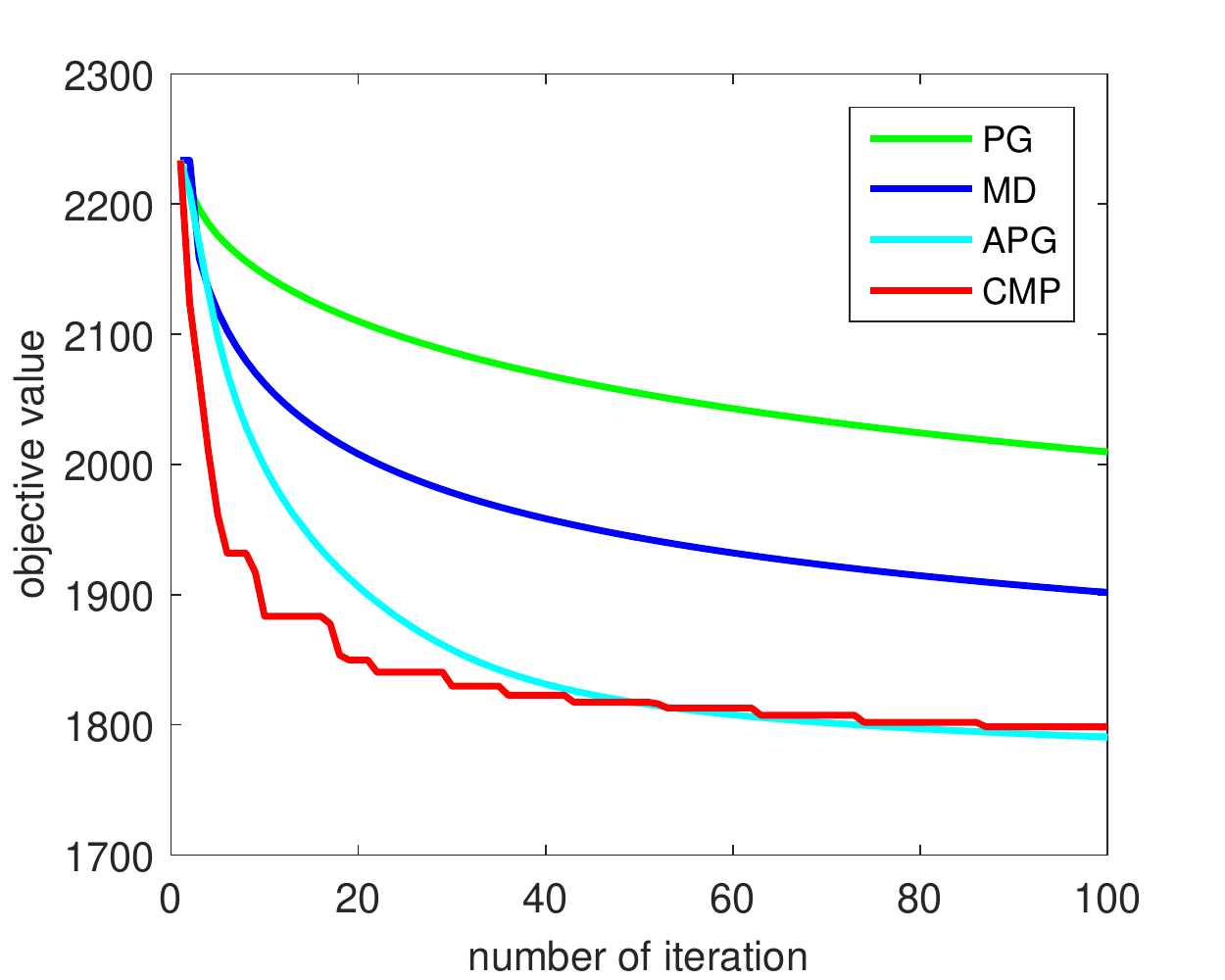}
    \caption{synthetic dataset, $\lambda=0.1$}
  \end{subfigure}%
  \begin{subfigure}[t]{.3\textwidth}
    \includegraphics[width=\textwidth]{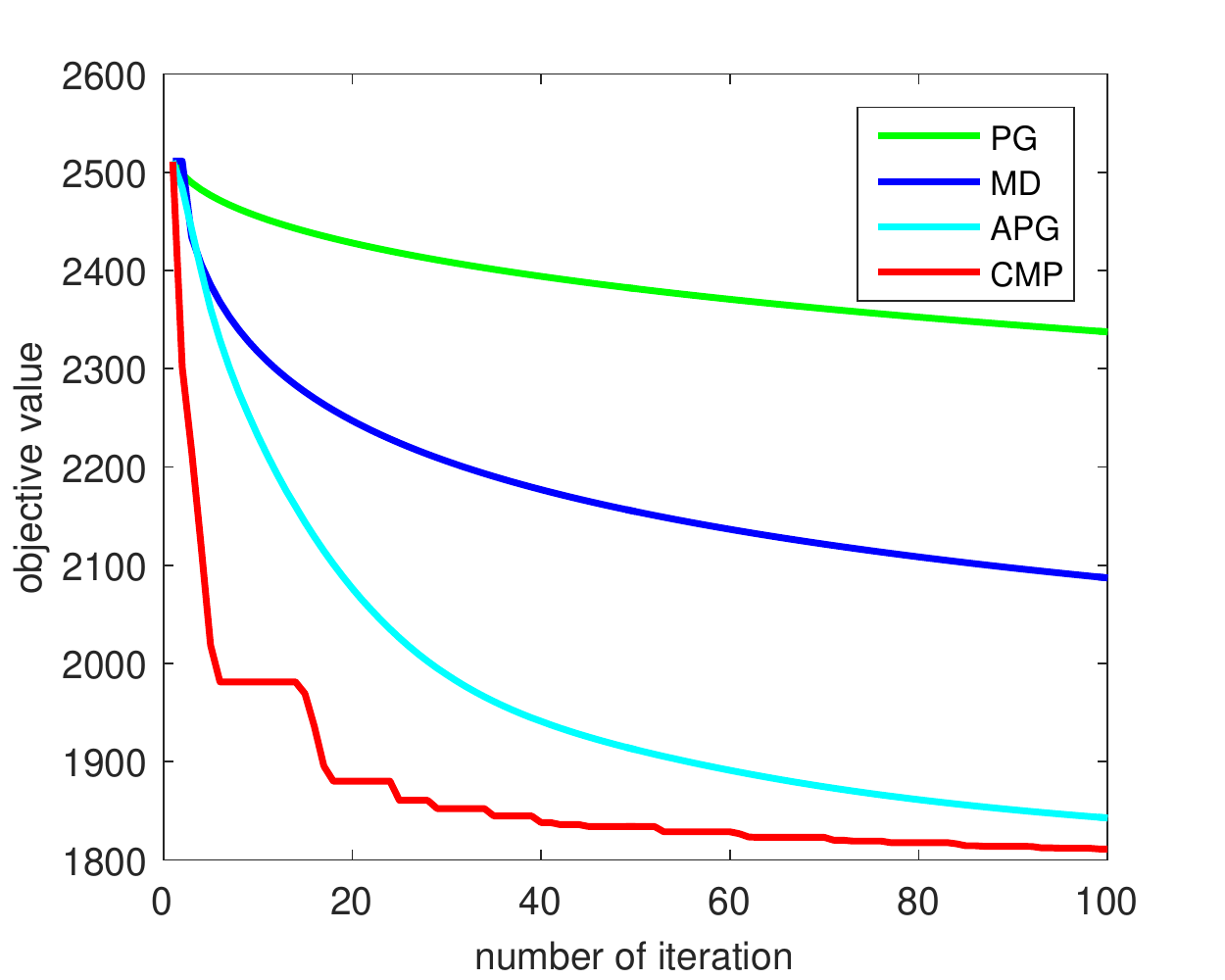}
    \caption{synthetic dataset, $\lambda=1$}
  \end{subfigure}%
  \begin{subfigure}[t]{.3\textwidth}
    \includegraphics[width=\textwidth]{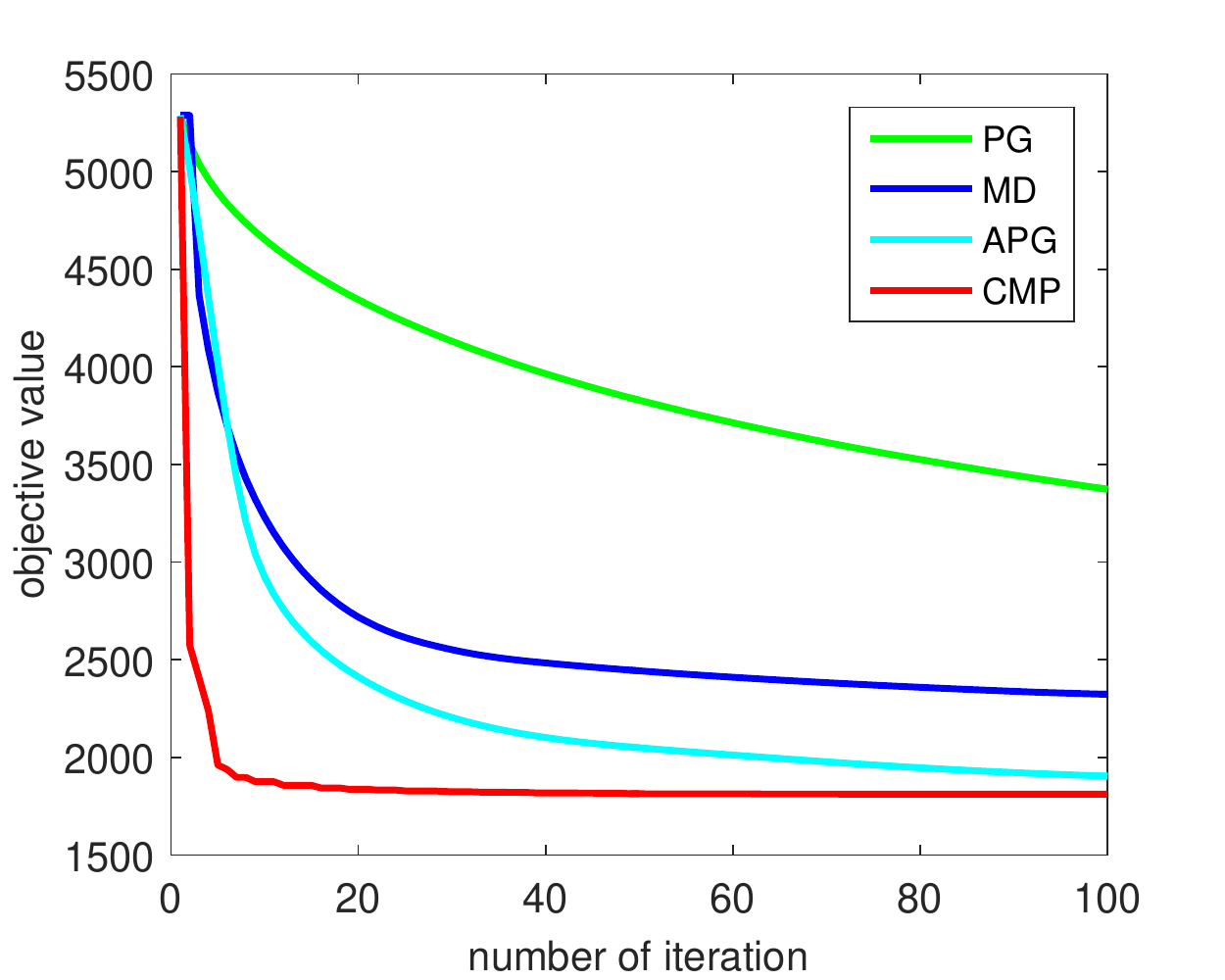}
     \caption{Synthetic dataset, $\lambda=10$}
  \end{subfigure}%
   
    \begin{subfigure}[t]{.3\textwidth}
    \includegraphics[width=\textwidth]{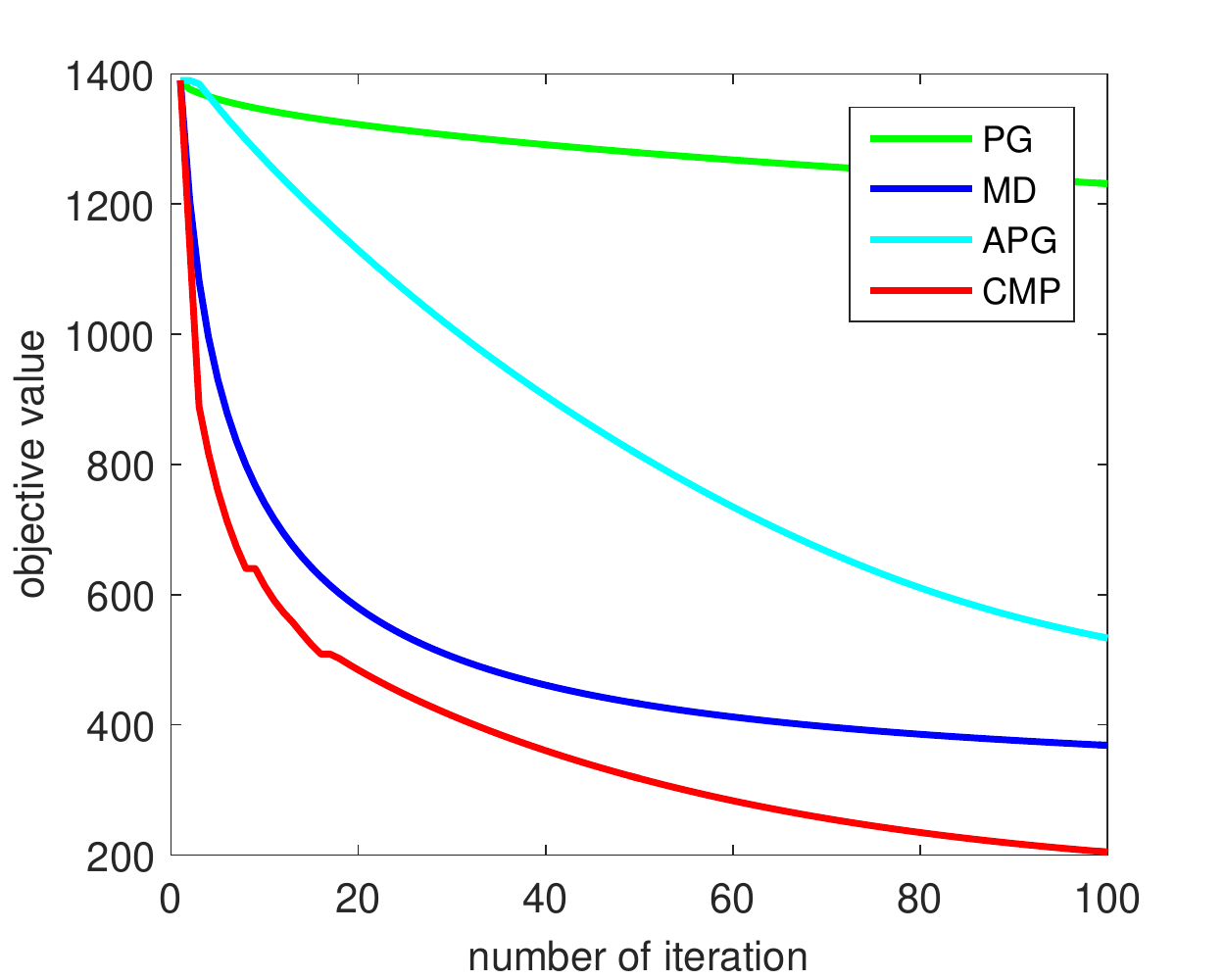}
    \caption{\textit{last.fm} dataset (small)}
  \end{subfigure}%
  \begin{subfigure}[t]{.3\textwidth}
    \includegraphics[width=\textwidth]{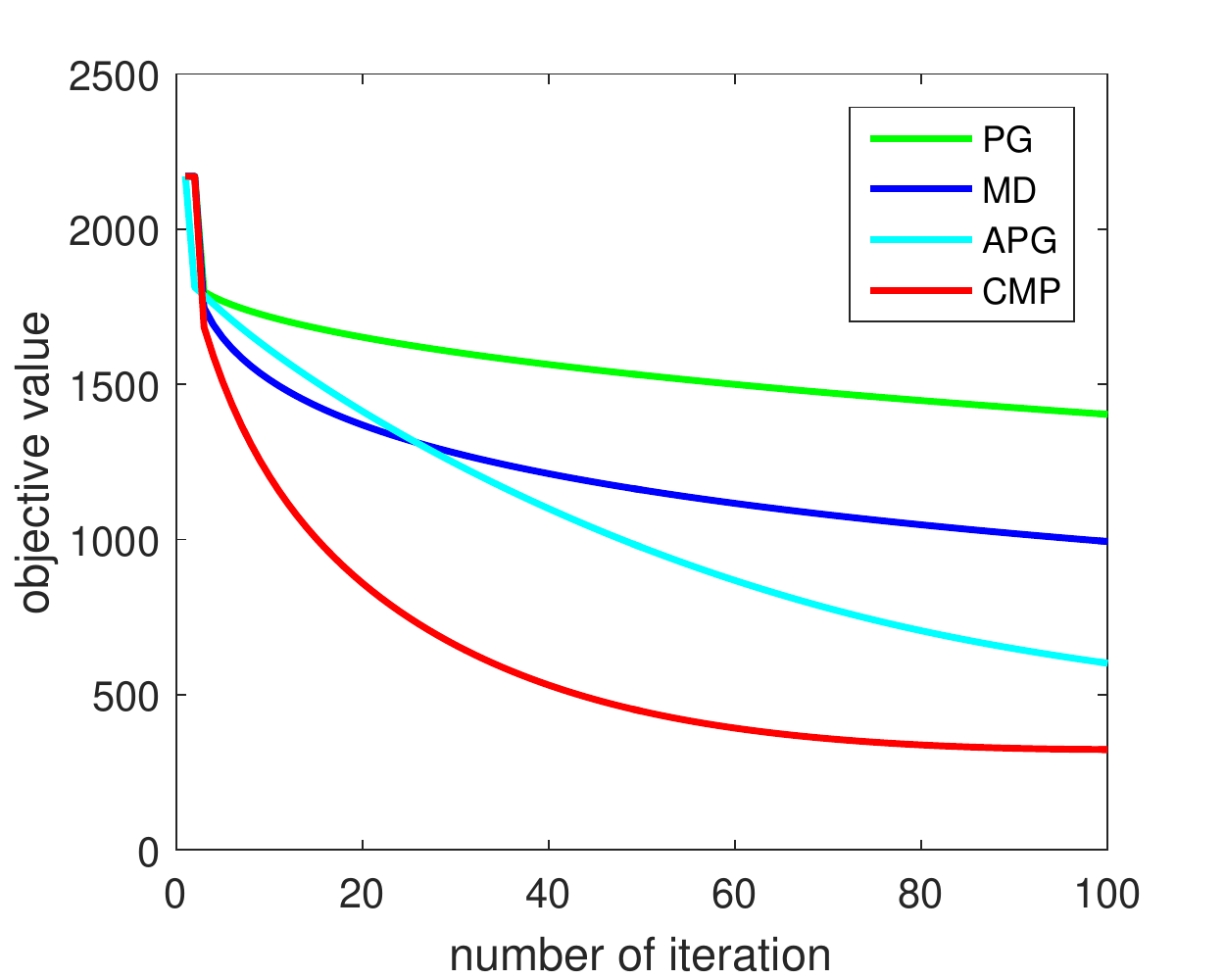}
    \caption{\textit{last.fm} dataset (medium)}
  \end{subfigure}%
  \begin{subfigure}[t]{.3\textwidth}
    \includegraphics[width=\textwidth]{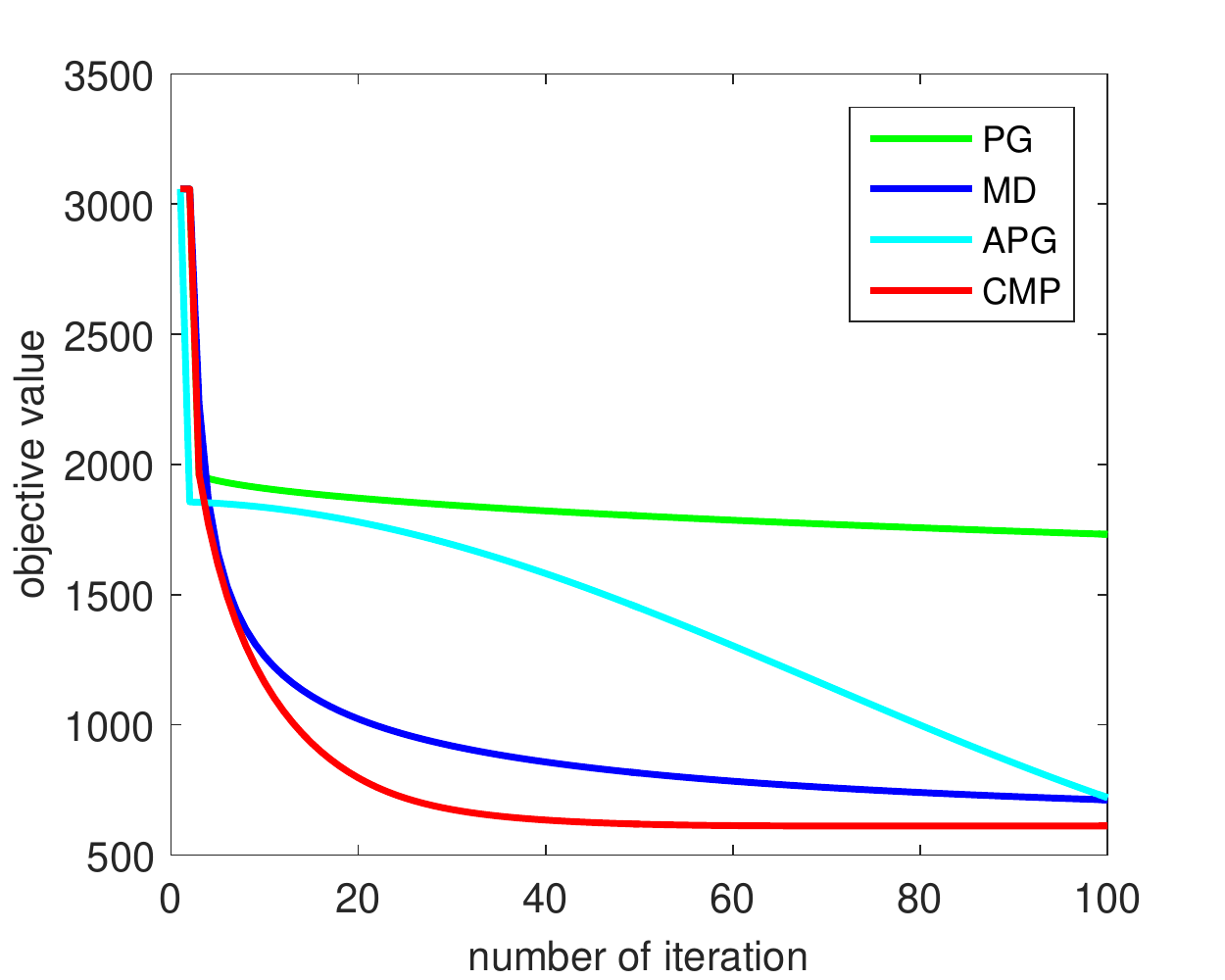}
     \caption{\textit{last.fm}  dataset (large)}
  \end{subfigure}%
  \caption{Temporal Recommendation System\\[-0.5cm]}
  \label{fig:RecSys}
  \end{center}
\end{figure*}

\section{Conclusion}
We presented a new family of algorithms that speed up a wide range of point process applications in machine learning.  The proposed algorithms resolve the long-standing issue with non-Lipschitzness of Poisson likelihood models and enjoy a $O(1/t)$ convergence rate, in contrast to the typical $O(1/\sqrt{t})$ rate for non-smooth optimization. Both on synthetic and real-world data, the proposed algorithms outperforms Mirror Descent and Accelerated Proximal Gradient. 
For future work, since the proposed algorithm performs especially well in the first iterations, it would be interesting to investigate how to optimize up to statistical accuracy. 



\begin{thebibliography}{10}

\bibitem{bach-et-al}
F.~Bach, R.~Jenatton, J.~Mairal, and G.~Obozinski.
\newblock Optimization with sparsity-inducing penalties.
\newblock {\em Foundations and Trends in Machine Learning}, 4(1):1--106, 2012.

\bibitem{Bauschke:2011}
H.~Bauschke and P.~Combettes.
\newblock {\em Convex Analysis and Monotone Operator Theory in Hilbert Spaces}.
\newblock Springer, 2011.

\bibitem{Teboulle09a}
A.~Beck and M.~Teboulle.
\newblock A fast iterative shrinkage-thresholding algorithm for linear inverse
  problems.
\newblock {\em SIAM Journal on Imaging Sciences}, 2(1):183--202, 2009.

\bibitem{ben2001ordered}
A.~Ben-Tal, T.~Margalit, and A.~Nemirovski.
\newblock The ordered subsets mirror descent optimization method with
  applications to tomography.
\newblock {\em SIAM Journal on Optimization}, 12(1):79--108, 2001.

\bibitem{bertsekas2011incremental}
D.~P. Bertsekas.
\newblock Incremental gradient, subgradient, and proximal methods for convex
  optimization: A survey.
\newblock {\em Optimization for Machine Learning}, 2010:1--38, 2011.

\bibitem{chambolle2011first}
A.~Chambolle and T.~Pock.
\newblock A first-order primal-dual algorithm for convex problems with
  applications to imaging.
\newblock {\em Journal of Mathematical Imaging and Vision}, 40(1):120--145,
  2011.

\bibitem{dang2014randomized}
C.~Dang and G.~Lan.
\newblock Randomized first-order methods for saddle point optimization.
\newblock {\em arXiv:1409.8625}, 2014.

\bibitem{dang2015sbmd}
C.~Dang and G.~Lan.
\newblock Stochastic block mirror descent methods for nonsmooth and stochastic optimization. \newblock {\em SIAM Journal on Optimization}, 25(2), pp.856-881.

\bibitem{Nan15}
N.~Du, Y.~Wang, N.~He, and L.~Song.
\newblock Time-sensitive recommendation from recurrent user activities.
\newblock In {\em NIPS}, 2015.

\bibitem{duchi2010composite}
J.~Duchi, S.~Shalev-Shwartz, Y.~Singer, and A.~Tewari.
\newblock Composite objective mirror descent.
\newblock In {\em COLT}, pages 14--26. Citeseer, 2010.

\bibitem{Green:Silverman:1994}
P.~Green and B.~Silverman.
\newblock { Nonparametric regression and generalized linear models : a
  roughness penalty approach}.
\newblock {\em Monographs on statistics and applied probability},
  1994.

\bibitem{gunawardana2011model}
A.~Gunawardana, C.~Meek, and P.~Xu.
\newblock A model for temporal dependencies in event streams.
\newblock In {\em NIPS}, pages
  1962--1970, 2011.

\bibitem{harmany2012spiral}
Z.~Harmany, R.~Marcia, and R.~Willett.
\newblock This is spiral-tap: sparse poisson intensity reconstruction
  algorithms—theory and practice.
\newblock {\em Image Processing, IEEE Transactions on}, 21(3):1084--1096, 2012.

\bibitem{hawkes1971spectra}
A.~G. Hawkes.
\newblock Spectra of some self-exciting and mutually exciting point processes.
\newblock {\em Biometrika},1971.

\bibitem{he:harch:2015}
N.~He and Z.~Harchaoui.
\newblock Semi-proximal mirror-prox for nonsmooth composite minimization.
\newblock In {\em NIPS}, 2015.

\bibitem{he2015mirror}
N.~He, A.~Juditsky, and A.~Nemirovski.
\newblock Mirror prox algorithm for multi-term composite minimization and
  semi-separable problems.
\newblock {\em Computational Optimization and Applications}, 61(2):275--319,
  2015.

\bibitem{he2016accelerated}
Y.~He and R.~D. Monteiro.
\newblock An accelerated hpe-type algorithm for a class of composite
  convex-concave saddle-point problems.
\newblock {\em SIAM Journal on Optimization}, 26(1):29--56, 2016.

\bibitem{iwata2013discovering}
T.~Iwata, A.~Shah, and Z.~Ghahramani.
\newblock Discovering latent influence in online social activities via shared
  cascade poisson processes.
\newblock In {\em ACM SIGKDD}, 2013.

\bibitem{KapSubKarSriJaiSch15}
K.~Kapoor, K.~Subbian, J.~Srivastava, and P.~Schrater.
\newblock Just in time recommendations: Modeling the dynamics of boredom in
  activity streams.
\newblock In {\em WSDM }, 2015.

\bibitem{lu2013complexity}
Z.~Lu and L.~Xiao.
\newblock On the complexity analysis of randomized block-coordinate descent
  methods.
\newblock {\em Mathematical Programming}, pages 1--28, 2013.

\bibitem{MohShoMarBraetal11}
G.~Mohler, M.~Short, P.~Brantingham, F.~ Schoenberg, and G.~Tita.
\newblock Self-exciting point process modeling of crime.
\newblock {\em Journal of the American Statistical Association}, 106(493),
  2011.

\bibitem{Nem09}
A.~Nemirovski, A.~Juditsky, G.~Lan, and A.~Shapiro.
\newblock Robust stochastic approximation approach to stochastic programming.
\newblock {\em SIAM Journal on Optimization}, 19(4):1574--1609, 2009.

\bibitem{blair1985problem}
A.~S. Nemirovsky and D.~B. Yudin.
\newblock Problem complexity and method efficiency in optimization.
\newblock {\em Discrete Mathematics}, 1983.

\bibitem{nesterov2012efficiency}
Y.~Nesterov.
\newblock Efficiency of coordinate descent methods on huge-scale optimization
  problems.
\newblock {\em SIAM Journal on Optimization}, 22(2):341--362, 2012.

\bibitem{NesCompMin}
Y.~Nesterov.
\newblock Gradient methods for minimizing composite functions.
\newblock {\em Mathematical Programming}, 140(1):125--161, 2013.

\bibitem{rajaram2005poisson}
S.~Rajaram, T.~Graepel, and R.~Herbrich.
\newblock Poisson-networks: A model for structured point processes.
\newblock In {\em AISTATS}, 2005.

\bibitem{richtarik2014iteration}
P.~Richt{\'a}rik and M.~Tak{\'a}{\v{c}}.
\newblock Iteration complexity of randomized block-coordinate descent methods
  for minimizing a composite function.
\newblock {\em Mathematical Programming}, 144(1-2):1--38, 2014.

\bibitem{schmidt2013minimizing}
M.~Schmidt, N.~L. Roux, and F.~Bach.
\newblock Minimizing finite sums with the stochastic average gradient.
\newblock {\em arXiv:1309.2388}, 2013.

\bibitem{Shalev-Shwartz:2013}
S.~Shalev-Shwartz and T.~Zhang.
\newblock Stochastic dual coordinate ascent methods for regularized loss.
\newblock {\em J. Mach. Learn. Res.}, 14(1):567--599, 2013.

\bibitem{simma2012modeling}
A.~Simma and M.~I. Jordan.
\newblock Modeling events with cascades of poisson processes.
\newblock {\em arXiv:1203.3516}, 2012.

\bibitem{sra2008non}
S.~Sra, D.~Kim, and B.~Sch{\"o}lkopf.
\newblock Non-monotonic poisson likelihood maximization.
\newblock {\em Technical report},2008.

\bibitem{tran2013composite}
Q.~Tran-Dinh, A.~Kyrillidis, and V.~Cevher.
\newblock Composite self-concordant minimization.
\newblock {\em  arXiv:1308.2867}, 2013.

\bibitem{Tseng08}
P.~Tseng.
\newblock On accelerated proximal gradient methods for convex-concave
  optimization.
\newblock {\em submitted to SIAM Journal on Optimization}, 2009.

\bibitem{yanez2014primal}
F.~Yanez and F.~Bach.
\newblock Primal-dual algorithms for non-negative matrix factorization with the
  kullback-leibler divergence.
\newblock {\em arXiv:1412.1788}, 2014.

\bibitem{yang2015efficient}
T.~Yang, M.~Mahdavi, R.~Jin, and S.~Zhu.
\newblock An efficient primal dual prox method for non-smooth optimization.
\newblock {\em Machine Learning}, 98(3):369--406, 2015.

\bibitem{zhang2015stochastic}
Y.~Zhang, and L.~Xiao.
\newblock Stochastic primal-dual coordinate method for regularized empirical risk minimization. 
\newblock In {\em ICML},  2015.

\bibitem{zhou2013learning}
K.~Zhou, H.~Zha, and L.~Song.
\newblock Learning social infectivity in sparse low-rank networks using
  multi-dimensional hawkes processes.
\newblock In {\em AISTATS}, 2013.

\end{thebibliography}

\bibliographystyle{abbrv}


\newpage
\appendix
\onecolumn
\begin{appendix}
\thispagestyle{plain}
\begin{center}
\noindent\rule{15cm}{1.5pt}\\
{\Large \bf Supplementary Material\\
Fast and Simple Optimization for Poisson Likelihood Models\\}
\noindent\rule{15cm}{1.5pt}\\
\end{center}
\paragraph{Outline.} The appendix  gives a self-contained presentation and analysis of the proposed Composite Mirror Prox algorithm and the Randomized Block Mirror Prox algorithm. We also provide detailed illustration and theoretical analysis of the algorithms when applied to the two point processing applications as well as another new application,  Poisson imaging (which is not discussed in the main text due to space limitation). 

\section{Relation between the Composite Saddle Point Problem and Variational Inequality}
By incorporating the saddle point representation, the penalized Poisson likelihood model (\ref{model:problemofinterest}) becomes a special case of composite saddle point problem
 \begin{equation}
\min_{u_1\in U_1}\max_{u_2\in U_2}\left[\phi(u_1,u_2)+\Psi_1(u_1)-\Psi_2(u_2)\right]
\end{equation}
that satisfies
\begin{itemize}
\item $U_1\subset E_1$ and $U_2\subset E_2$ are nonempty closed convex sets in Euclidean spaces $E_1,E_2$;
\item $\phi(u_1,u_2)$ is a convex-concave function on $U_1\times U_2$ with  Lipschitz continuous gradient;
\item $\Psi_1:U_1\to\bR$ and $\Psi_2:U_2\to\bR$ are convex functions, perhaps non-smooth, but ``fitting'' the domains $U_1$, $U_2$ in the following
sense: for $i=1,2$, we can equip $E_i$ with a norm $\|\cdot\|_{(i)}$, and $U_i$  with a compatible with this norm distance generating function(d.g.f.)
$\omega_i(\cdot)$ in such a way that optimization problems of the form for any $\alpha>0,\beta>0$ and input $\xi\in E_i$
\begin{equation}\label{auxil}
\min_{u_i\in U_i}\left\{\alpha \omega_i(u_i)+\langle \xi,u_i\rangle+\beta \Psi_i(u_i) \right\}
\end{equation}
are easy to solve.
\end{itemize}

In the case of Poisson likelihoods, we have the embedding Euclidean spaces $E_1=\bR^n$, $E_2=\bR^m$, and the closed convex domains  $U_1=\bR_+^n$, $U_2=\bR_+^m$, the convex-concave function $\phi(u_1,u_2)=s^Tu_1-u_2^TAu_1+c_0$,  and  two convex non-smooth terms  $\Psi_1(u_1)=h(u_1)$,  $\Psi_2(u_2)=-\sum_{i=1}^mc_i\log(u_{2,i})$. Particularly, we could equip $E_2$ with distance generating function $\omega_2(u_2)=\frac{1}{2}\|u_2\|_2^2$ w.r.t. the $L_2$ norm $\|\cdot\|_2$ and equip $E_1$ with the distance generating function $\omega_1(u_1)=\sum_{j=1}^n u_{1,j}\log(u_{1,j})$ w.r.t. the $L_1$ norm $\|\cdot\|_1$. 

We can write the composite saddle point problem equivalently as 
\begin{equation}\label{equivalently}
\min_{x^1=[u_1;v_1]\in X_1}\max_{x^2=[u_2;v_2]\in X_2}\left[\Phi(u_1,v_1;u_2,v_2)=\phi(u_1,u_2)+v_1-v_2\right]
\end{equation}
where $X_i=\{x_i=[u_i;v_i]\in E_i\times \bR: u_i\in U_i,v_i\geq\Psi_i(u_i)\}$ for $i=1,2$.

Finding a saddle point $x=[x^1;x^2]$ of $\Phi$ on $X_1\times X_2$ reduces to solving the associated variational inequality (V.I.),
\begin{equation}
\text{Find } x_*=[x^1_*;x^2_*]\in X_1\times X_2:\; \langle F(x_*),x-x_*\rangle \geq 0,\; \forall x\in X_1\times X_2
\end{equation}
where 
\begin{equation*}
F(u=[u_1;u_2],v=[v_1;v_2])=\left[F_u(u)=[\nabla_{u_1}\phi(u_1,u_2);-\nabla_{u_2}\phi(u_1,u_2)];F_v=[1;1]\right].
\end{equation*}
Note that since  $\Phi$ is convex-concave and continuously differentiable, the operator $F$ is monotone and Lipschitz continuous.

\section{Revisiting the Composite Mirror Prox Algorithm}
We intend to process the above type of composite saddle point problem by a simple prox-method - composite Mirror Prox algorithm, 
as established in~\cite{he2015mirror,he:harch:2015}. 
The algorithm is designed to solve variational inequalities with the above structure, allowing to cover the composite saddle point problem as a special case. 

\paragraph{Variational inequality with composite structure.} Essentially, we aim at solving the variational inequality VI$(X,F)$:
$$ \text{Find } x_*\in X: \langle F(x),x-x_*\rangle\geq 0, \forall x\in X$$
with domain $X$ and operator $F$ that satisfy the conditions below:
\begin{enumerate}
\item Set $X\subset E_u\times E_v$ is closed convex and its projection $PX=\{u: x=[u;v]\in X\}\subset U$, where $U$ is convex and closed, $E_u,E_v$ are Euclidean spaces;
\item The function $\omega(\cdot):U\to \bR$ is continuously differentiable 
and also 1-strongly convex w.r.t. some norm $\|\cdot\|$, that is 
\begin{equation*}
\omega(u')\geq\omega(u)+\langle\nabla \omega(u),u'-u\rangle+ \frac{1}{2}\|u'-u\|^2,\forall u,u'\in U;
\end{equation*}
\item The operator $F(x=[u,v]): X\to E_u\times E_v$ is monotone and of form $ F(u,v)=[F_u(u);F_v]$
with $F_v\in E_v$ being a constant and $F_u(u)\in E_u$ is Lipschitz continuous, i.e. for some $L>0$,
\begin{equation*}
\begin{array}{c}
\forall u,u'\in U: \|F_u(u)-F_u(u')\|_*\leq L\|u-u'\|
\end{array}\end{equation*}
where $\|\cdot\|_*$ is the dual norm to $\|\cdot\|$. 
\item The linear form $\langle F_v,v\rangle$ of $[u;v]\in E_u\times E_v$ is bounded from below on $X$ and is coercive on $X$ w.r.t. $v$.
\end{enumerate}

\paragraph{Composite Mirror Prox.} The algorithm converges at a rate of $O(L/t)$ and works as follows
 \begin{algorithm}[H]
\caption{Composite Mirror Prox Algorithm}
\label{alg:CoMP}
\begin{algorithmic}
\STATE \textbf{Input:} stepsizes $\gamma_t>0$, $t = 1,2, \ldots$
\STATE Initialize $x^1=[u^1;v^1]\in X$
\FOR{$t=1,2,\ldots,T$}
\STATE \text{}\\[-0.5cm]
\begin{equation}\label{eq:MPstep}
\begin{array}{rcl}
y^{t}:=[\hat u^t; \hat v^t]&=&P_{x^t}(\gamma_t F(x^t))=P_{x^t}(\gamma_t[ F_u(u^t);F_v])\\
x^{t+1}:=[u^{t+1};v^{t+1}]&=&P_{x^t}(\gamma_t F(y^t))=P_{x_t}(\gamma_t[ F_u(\hat u^t);F_v])
\end{array}
\end{equation}
\ENDFOR\\
\STATE \textbf{Output:} $x_{T}:=[{u}_{T};{v}_{T}] ={(\sum_{t=1}^t\gamma_t)}^{-1}{\sum_{t=1}^t \gamma_t y^{t}}$
\end{algorithmic}
\end{algorithm}
where the prox-mapping is defined by 
\begin{align}
P_{x_0}(\xi) =\Argmin_{x:=[u;v]\in X} \left\{\langle \xi,x\rangle +V_\omega(u,u_0)\right\} \label{eq:prox}
\end{align}
for any $x_0=[u_0;v_0]$ and $\xi\in E_u\times E_v$ and Bregman distance $V_\omega(u,u_0)=\omega(u)-\omega(u_0)-\langle\omega'(u_0),u-u_0\rangle$. 

\begin{theorem} \cite{he2015mirror} Under the above situation and under the choice of stepsizes $0<\gamma_t\leq 1/L$, we have for any set $X'\subset X$, it holds
\begin{equation}\label{eq:mainCMP}
\epsilonvi(x_T|X',F):=\sup_{x\in X'}\langle F(x), x_T-x\rangle \leq \frac{\sup_{u:[u,v]\in X'}V_\omega(u,u^1)}{\sum_{t=1}^T\gamma_t}
\end{equation}
\end{theorem}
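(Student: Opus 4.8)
The plan is to follow the standard Mirror Prox argument, adapted to accommodate the composite structure in which the operator carries a constant block $F_v$ that lies outside the Bregman geometry. The single workhorse is the three-point inequality for the prox-mapping: if $z=[u_z;v_z]=P_{x_0}(\xi)$ with $x_0=[u_0;v_0]$, then writing the first-order optimality condition of the minimization defining $P_{x_0}$ (strongly convex in the $u$-block, linear in the $v$-block) and invoking the Bregman identity $\langle\nabla\omega(u_z)-\nabla\omega(u_0),u-u_z\rangle=V_\omega(u,u_0)-V_\omega(u,u_z)-V_\omega(u_z,u_0)$, one obtains, for every $x=[u;v]\in X$,
\[
\langle\xi,z-x\rangle\le V_\omega(u,u_0)-V_\omega(u,u_z)-V_\omega(u_z,u_0).
\]
A preliminary remark is that $P_{x_0}(\xi)$ is well-defined: the objective in \eqref{eq:prox} is continuous, $1$-strongly convex in $u$, and — by condition 4 — bounded below and coercive in $v$ on $X$, so a minimizer exists and the $u$-part is unique; moreover $\hat u^t,u^t\in PX\subset U$, so the Lipschitz bound on $F_u$ applies to them.

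First I would apply this inequality to the two prox steps of iteration $t$ in \eqref{eq:MPstep}. Applying it to $x^{t+1}=P_{x^t}(\gamma_t F(y^t))$ at a generic test point $x=[u;v]\in X$ gives $\gamma_t\langle F(y^t),x^{t+1}-x\rangle\le V_\omega(u,u^t)-V_\omega(u,u^{t+1})-V_\omega(u^{t+1},u^t)$, and applying it to $y^t=P_{x^t}(\gamma_t F(x^t))$ at the test point $x^{t+1}$ gives $\gamma_t\langle F(x^t),y^t-x^{t+1}\rangle\le V_\omega(u^{t+1},u^t)-V_\omega(u^{t+1},\hat u^t)-V_\omega(\hat u^t,u^t)$. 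Then I would write
\[
\gamma_t\langle F(y^t),y^t-x\rangle=\gamma_t\langle F(y^t)-F(x^t),\hat u^t-u^{t+1}\rangle+\gamma_t\langle F(x^t),y^t-x^{t+1}\rangle+\gamma_t\langle F(y^t),x^{t+1}-x\rangle,
\]
noting that in the first term the constant block $F_v$ cancels, so $F(y^t)-F(x^t)=[F_u(\hat u^t)-F_u(u^t);0]$. Bounding that term by Cauchy--Schwarz with the dual norm, the Lipschitz property $\|F_u(\hat u^t)-F_u(u^t)\|_*\le L\|\hat u^t-u^t\|$, Young's inequality, the stepsize restriction $\gamma_tL\le1$, and $1$-strong convexity yields $\gamma_t\langle F_u(\hat u^t)-F_u(u^t),\hat u^t-u^{t+1}\rangle\le V_\omega(\hat u^t,u^t)+V_\omega(u^{t+1},\hat u^t)$. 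Adding the three pieces, the terms $-V_\omega(\hat u^t,u^t)$, $-V_\omega(u^{t+1},\hat u^t)$, $-V_\omega(u^{t+1},u^t)$ are all absorbed, leaving the clean recursion $\gamma_t\langle F(y^t),y^t-x\rangle\le V_\omega(u,u^t)-V_\omega(u,u^{t+1})$ for all $x=[u;v]\in X$.

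Next I would telescope over $t=1,\dots,T$ and drop the nonnegative residual $V_\omega(u,u^{T+1})$ to get $\sum_{t=1}^T\gamma_t\langle F(y^t),y^t-x\rangle\le V_\omega(u,u^1)$. Monotonicity of $F$ gives $\langle F(y^t),y^t-x\rangle\ge\langle F(x),y^t-x\rangle$, hence $\sum_{t=1}^T\gamma_t\langle F(x),y^t-x\rangle\le V_\omega(u,u^1)$; dividing by $\sum_{t=1}^T\gamma_t$ and using $x_T=(\sum_{t=1}^T\gamma_t)^{-1}\sum_{t=1}^T\gamma_t y^t$ together with linearity of $\langle F(x),\cdot\rangle$ gives $\langle F(x),x_T-x\rangle\le V_\omega(u,u^1)/\sum_{t=1}^T\gamma_t$. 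Taking the supremum over $x=[u;v]\in X'$ yields exactly \eqref{eq:mainCMP}.

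The main obstacle is the cross-term step: making $\gamma_t\langle F(y^t)-F(x^t),\hat u^t-u^{t+1}\rangle$ collapse against the right combination of negative Bregman terms. This needs (i) recognizing that the constant block $F_v$ drops out of the difference so only the Lipschitz block $F_u$ is active, and (ii) splitting in Young's inequality so that, under $\gamma_tL\le1$ and $1$-strong convexity, each squared-norm remainder is dominated by one available $-V_\omega$ term. Everything else — the three-point inequality, telescoping, and passing from the $y^t$-gap to the $F(x)$-gap via monotonicity and convexity of the averaging — is routine.
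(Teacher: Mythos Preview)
Your proposal is correct and follows essentially the same route as the paper's own argument (visible in the proofs of Theorems~D.1 and~E.1, which specialize the batch argument to the randomized setting): the three-point identity applied to each prox step, the cross-term $\gamma_t\langle F_u(\hat u^t)-F_u(u^t),\hat u^t-u^{t+1}\rangle$ bounded via Cauchy--Schwarz, the Lipschitz assumption, Young's inequality and $1$-strong convexity, telescoping, and finally monotonicity. The only cosmetic difference is that the paper packages the cross-term and the two negative Bregman terms into a single quantity $\sigma_t$ and shows $\sigma_t\le 0$, whereas you bound the cross-term above by $V_\omega(\hat u^t,u^t)+V_\omega(u^{t+1},\hat u^t)$ and then let those cancel; the inequalities are identical.
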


For composite saddle point problems as described in Section 3, the above algorithm reduces to Algorithm~1 and we immediately arrive at the convergence results stated in Lemma~\ref{lem:main}.

\section{Composite Mirror Prox Algorithm for Penalized Poisson Regression}\label{sec:appendix_PoissonReg}
The crux of our approach is to work on the saddle point representation of the penalized Poisson regression  (\ref{model:problemofinterest}), which is given by 
\begin{eqnarray}
\min_{x\in \bR_+^n} \max_{y\in\bR_{++}^m} s^Tx-y^TAx+\sum_{i=1}^m c_i \log(y_i)+h(x)+c_0.
\end{eqnarray}
The resulting saddle point problem falls exactly into the regime of composite saddle point problem as described in Section 3. Invoking  Lemma~\ref{lem:main} with the specific mixed proximal setups, we can easily derive the error bounds as stated in Theorem~\ref{prop:Poisson}. To avoid the redundancy, we omit the proof here. In the following, we provide the proof for the following simple fact. 

\begingroup
\def\thetheorem{\ref{prop:domain}}
\begin{proposition}
The optimal solution $x_*$ to the problem in (\ref{model:problemofinterest}) satisfies
\begin{equation}\label{eq:relation}
\begin{array}{c}
 s^Tx_*+h(x_*)= \sum_{i=1}^m c_i.
\end{array}
\end{equation}
\end{proposition}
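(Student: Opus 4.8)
The plan is to exploit a scaling argument at the optimum. The objective $f(x) = L(x) + h(x) = s^T x - \sum_{i=1}^m c_i \log(a_i^T x) + h(x)$ is, by assumption, defined on $\bR_+^n$ where it is finite, i.e. where $a_i^T x > 0$ for all $i$. Note that the feasible set $\bR_+^n$ is a cone, so for any feasible $x$ with finite objective and any scalar $t > 0$, the point $tx$ is also feasible with finite objective. The idea is to consider the univariate function $g(t) := f(tx_*)$ for $t$ in a neighbourhood of $1$, observe that it is minimized at $t = 1$ (since $x_*$ is a global minimizer over the whole cone, in particular over the ray $\{t x_* : t > 0\}$), and conclude by the first-order optimality condition $g'(1) = 0$.

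First I would write out $g(t)$ explicitly using the homogeneity assumption on $h$: since $h(ax) = |a| h(x)$, for $t > 0$ we get $h(t x_*) = t\, h(x_*)$. Also $s^T(t x_*) = t\, s^T x_*$ and $\log(a_i^T (t x_*)) = \log t + \log(a_i^T x_*)$, so
\[
g(t) = t\, s^T x_* - \sum_{i=1}^m c_i \big(\log t + \log(a_i^T x_*)\big) + t\, h(x_*) = t\big(s^T x_* + h(x_*)\big) - \Big(\sum_{i=1}^m c_i\Big)\log t + \text{const}.
\]
This is a smooth convex function of $t$ on $(0,\infty)$ (linear minus a multiple of $\log$, with $\sum c_i \geq 0$). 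Since $x_* \in \bR_+^n$ achieves the global minimum of $f$ over $\bR_+^n$, and $t x_* \in \bR_+^n$ for all $t > 0$, the point $t = 1$ minimizes $g$ over $(0, \infty)$. Taking the derivative, $g'(t) = \big(s^T x_* + h(x_*)\big) - \big(\sum_{i=1}^m c_i\big)/t$, and setting $g'(1) = 0$ yields exactly $s^T x_* + h(x_*) = \sum_{i=1}^m c_i$, which is \eqref{eq:relation}.

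The main thing to be careful about — the only real obstacle — is justifying that $g$ is differentiable at $t=1$ and that the unconstrained stationarity condition applies, i.e. that $t=1$ is an interior minimizer of $g$ on $(0,\infty)$ rather than a boundary point. This is immediate: $g$ is finite and smooth on all of $(0,\infty)$ because $a_i^T x_* > 0$ (as $f(x_*)$ is finite) implies $a_i^T(t x_*) > 0$ for every $t > 0$, and $h(t x_*) = t h(x_*)$ is linear in $t$; moreover $(0,\infty)$ is open, so any minimizer is interior and $g'(1)=0$. One should also note $\sum_i c_i > 0$ in the nondegenerate case (otherwise $L \equiv s^Tx$ is linear and the statement is a triviality or the problem is unbounded), but this plays no role in the derivation of the identity. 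I would also remark that the homogeneity of $h$ is exactly what makes the linear-in-$t$ term collect $s^T x_* + h(x_*)$ cleanly; without it the argument would only give a statement involving a subgradient of $h$.
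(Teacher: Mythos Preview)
Your proposal is correct and is essentially identical to the paper's own proof: the paper also defines $\phi(t):=L(tx_*)+h(tx_*)$, uses the homogeneity of $h$ to write it as $t(s^Tx_*+h(x_*))-\big(\sum_i c_i\big)\log t + \text{const}$, and invokes $\phi'(1)=0$ by optimality of $x_*$. Your write-up is in fact more careful than the paper's about justifying why $t=1$ is an interior minimizer and why the homogeneity assumption is needed.
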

\addtocounter{theorem}{-1}
\endgroup

\begin{proof} This is because, for any $t>0$, $tx_*$ is a feasible solution and the objective at this point is 
$\phi(t):=L(t x_*)+h(tx_*) =t(s^Tx_*+h(x_*))-\sum_{i=1}^mc_i\log (a_i^Tx_*)-\log(t)\sum_{i=1}^mc_i.
$
By optimality, $\phi'(1)=0$, i.e. (\ref{eq:relation}) holds.
\end{proof}

\section{Fully Randomized Block Mirror Prox Algorithm}\label{sec:appendix_RBMP}
We propose a randomized block-decomposition variant of Composite Mirror Prox, that is appropriate
for large-sample datasets. Block-coordinate optimization has received much attention and success recently for solving high-dimensional problems.  However, to the best of our knowledge, 
this is the first time that a randomized block-coordinate variant of Mirror Prox is developed.

\paragraph{Variational inequality with block structure.} We consider the above variational inequality with block structure, i.e. 
$$X=X_1\times X_2\times\cdots\times X_b$$
where $X_k$ are closed convex sets. More specifically, we consider the situation
\begin{enumerate}
\item For $k=1,\ldots,b$, $X_k$ is closed convex and its projection $PX_k=\{u_k: x_k=[u_k;v_k]\in X_k\}\subset U_k$, where $U_k$ is convex and closed;
\item For $k=1,\ldots,b$, the function $\omega_k(\cdot):U_k\to \bR$ is continuously differentiable 
and also 1-strongly convex w.r.t. some norm $\|\cdot\|_k$, that is 
\begin{equation*}
\omega_k(u')\geq\omega_k(u)+\langle\nabla \omega_k(u),u'-u\rangle+ \frac{1}{2}\|u'-u\|^2,\forall u,u'\in U_k;
\end{equation*}
This defines the Bregman distance
$V_k(u,u_0)=\omega(u)-\omega(u_0)-\langle\omega'(u_0),u-u_0\rangle$ for any $u,u_0\in U_k$.
\item The operator $F(x=[u,v])=[F_u(u);F_v]$ with $F_u(u)=[F_{u,1}(u);\ldots;F_{u,b}(u)]$ and $F_v=[F_{v,1};\ldots; F_{v,b}]$, and assume for any $k=1,\ldots, b$, 
\begin{equation*}
\begin{array}{c}
\|F_{u,k}(u)-F_{u,k}(u')\|_{k,*}\leq L_k\|u_k-u_k'\|_k, \;   \forall u,u'\in U_k \text{ and } u_l=u'_l, l\neq k
\end{array}\end{equation*}
\item For $k=1,\ldots,b$, the linear form $\langle F_{v,k} ,\cdot \rangle$  is bounded from below and coercive on $U_k$ .
\end{enumerate}

\paragraph{Randomized block Mirror Prox.} We present the algorithm below. To the best of our knowledge, this is the first time, such modification of the Mirror Prox algorithm is developed.  
 \begin{algorithm}[H]
\caption{Randomized Block Mirror Prox Algorithm }
\label{alg:RBMP}
\begin{algorithmic}
\STATE \textbf{Input:} stepsizes $\gamma_t>0$, $t = 1,2, \ldots$
\STATE Initialize $x^1=[u^1;v^1]\in X$
\FOR{$t=1,2,\ldots,T$}
\STATE  Pick $k_t$ at random in $\{1,...,b\}$
\STATE Update 
$y^{t}:=[\hat u^t; \hat v^t]=\left\{
\begin{array}{ll}
P^{k}_{x^t}(\gamma_t[ F_{u,k}(u^t);F_{v,k}]), & k=k_t\\
x^{t}_k, & k\neq k_t
\end{array}\right.
$
\STATE Update
$x^{t+1}:=[u^{t+1}; v^{t+1}]=\left\{
\begin{array}{ll}
P^{k}_{x^t}(\gamma_t[ F_{u,k}(\hat u^t);F_{v,k}]), & k=k_t\\
x^{t}_k, & k\neq k_t
\end{array}\right.
$
\ENDFOR\\
\STATE \textbf{Output:} $x_{T}:=[{u}_{T};{v}_{T}] ={(\sum_{t=1}^t\gamma_t)}^{-1}{\sum_{t=1}^t \gamma_t y^{t}}$
\end{algorithmic}
\end{algorithm}
Unlike the composite Mirror Prox algorithm, the new algorithm randomly pick one block to update at each iteration, which significantly reduces the iteration cost. We discuss the main convergence property of the above algorithm.  For simplicity, we consider the simple situation where the index of block is selected according to a uniform distribution. The analysis could be extended to non-uniform distribution; we leave this for future work.

\paragraph{Convergence analysis.}   
We have the following result
\begin{theorem}\label{thm:RBMP}
Assume that the sequence of step-sizes $(\gamma_t)$ in the above algorithm satisfy $0<\gamma_t\max_{k=1,\ldots,b} L_k\leq 1$. 
Then we have 
\begin{equation}\label{vigap}
\forall z\in X, \; \bE[\langle F(z), x_T-z\rangle]\leq \frac{b\cdot \sup_{[u;v]\in X}\sum_{k=1}^bV_k(u_k,u^1_k)}{\sum_{t=1}^T\gamma_t}.
\end{equation}
In particular, when $\gamma_t\equiv \frac{1}{\max_{k=1,\ldots,b} L_k}$, we have
\begin{equation}
\forall z\in X,\; \bE[\langle F(z), x_T-z\rangle]\leq \frac{b\cdot \sup_{[u;v]\in X}\sum_{k=1}^bV_k(u_k,u^1_k) \cdot\max_{k=1,\ldots,b} L_k}{T}.
\end{equation}
\end{theorem}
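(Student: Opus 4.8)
\textbf{Proof proposal for Theorem~\ref{thm:RBMP}.}

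The plan is to mimic the standard Mirror Prox one-step analysis, but carried out block-by-block, and then take conditional expectation over the random block choice $k_t$ so that the ``missing'' blocks are supplied in expectation at the cost of a factor $b$. First I would recall the classical Mirror Prox three-point inequality for a single prox-step: if $z^+ = P_z(\xi)$ then for all $x$ in the relevant domain, $\langle \xi, z^+ - x\rangle \leq V_\omega(x,z) - V_\omega(x,z^+) - V_\omega(z^+,z)$. Applying this twice within iteration $t$ (first to the $\hat u^t$-step with $\xi = \gamma_t F(x^t)$, then to the $x^{t+1}$-step with $\xi = \gamma_t F(y^t)$) and combining in the usual way yields, for the \emph{active} block $k_t$, a bound of the shape
\begin{equation*}
\gamma_t\langle F_{u,k_t}(\hat u^t), \hat u^t_{k_t} - u_{k_t}\rangle \leq V_{k_t}(u_{k_t}, u^t_{k_t}) - V_{k_t}(u_{k_t}, u^{t+1}_{k_t}) + \gamma_t^2 L_{k_t}^2 V_{k_t}(\hat u^t_{k_t}, u^t_{k_t}) - V_{k_t}(\hat u^t_{k_t},u^t_{k_t}) - V_{k_t}(u^{t+1}_{k_t},\hat u^t_{k_t}),
\end{equation*}
so that under $\gamma_t L_{k_t}\leq 1$ the two rightmost ``negative'' terms dominate the error term and the net inequality is $\gamma_t\langle F_{u,k_t}(\hat u^t),\hat u^t_{k_t}-u_{k_t}\rangle \leq V_{k_t}(u_{k_t},u^t_{k_t}) - V_{k_t}(u_{k_t},u^{t+1}_{k_t})$. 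For the blocks $k\neq k_t$ nothing moves, so trivially $V_k(u_k,u^{t+1}_k) = V_k(u_k,u^t_k)$. The linear $F_v$ part is handled exactly as in the non-randomized case (it is a constant, and the coercivity/boundedness assumption 4 guarantees the $v$-components behave well); I would fold it in without fuss.

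Next I would take the conditional expectation given the history up to step $t$. Since $k_t$ is uniform on $\{1,\dots,b\}$, $\bE_{k_t}[\langle F_{u,k_t}(\hat u^t),\hat u^t_{k_t}-u_{k_t}\rangle \mid \mathcal{F}_t]$ relates to $\frac{1}{b}\langle F_u(\hat u^t), \hat u^t - u\rangle$ — here one must be slightly careful, because $\hat u^t$ itself depends on $k_t$; the point is that on the non-active blocks $\hat u^t_k = u^t_k$, and monotonicity of $F$ together with the block-Lipschitz structure lets me pass from $\langle F_u(\hat u^t),\hat u^t-u\rangle$ down to the quantity $\langle F(z), x_T - z\rangle$ we actually want to bound after summing and using convexity of the gap in the Mirror Prox averaging. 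Summing the per-step inequalities over $t=1,\dots,T$, the Bregman terms telescope on each block, leaving $\sum_k V_k(u_k,u^1_k) - \sum_k V_k(u_k,u^{T+1}_k) \leq \sum_k V_k(u_k,u^1_k) \leq \sup_{[u;v]\in X}\sum_k V_k(u_k,u^1_k)$. Dividing by $\sum_t\gamma_t$ and accounting for the $\frac1b$ loss from the random selection gives the factor $b$ in the numerator, which is exactly \eqref{vigap}. The second display is the specialization $\gamma_t \equiv 1/\max_k L_k$, giving $\sum_t\gamma_t = T/\max_k L_k$.

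The main obstacle I anticipate is the telescoping/expectation bookkeeping across blocks when $\hat u^t$ depends on the random index $k_t$: one has to set up a filtration carefully, argue that the inner product over the active block, after conditioning, reconstitutes (a $1/b$ fraction of) the full variational-inequality inner product, and verify that monotonicity of $F$ is used at the right place so that $\langle F(\hat u^t),\hat u^t - z\rangle \geq \langle F(z),\hat u^t - z\rangle$ can be invoked before averaging. A secondary technical point is making the step-size condition $\gamma_t\max_k L_k \leq 1$ (rather than a block-dependent $\gamma_t L_{k_t}\le 1$) suffice; this is immediate since $\max_k L_k \geq L_{k_t}$ always, but it is worth stating explicitly. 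Everything else — the two-step prox inequality, the strong-convexity lower bounds, the handling of the constant $F_v$ component — is routine and parallels the proof of the non-randomized Composite Mirror Prox theorem cited from \cite{he2015mirror}.
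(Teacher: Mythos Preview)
Your proposal is correct and follows essentially the same route as the paper's proof: apply the two prox-mapping inequalities on the active block $k_t$ to obtain a per-step bound with an error term $\sigma_{t,k_t}$, show $\sigma_{t,k_t}\le 0$ under $\gamma_t\max_k L_k\le 1$, exploit that non-active blocks are frozen so the Bregman telescoping holds pathwise, take expectation over the uniform block choice to turn the single-block inner product into $\tfrac{1}{b}\langle F(y^t),y^t-z\rangle$, and finish with monotonicity of $F$ and averaging. The only cosmetic difference is ordering: the paper sums over $t$ first (pathwise telescoping) and then takes expectation, whereas you propose conditioning first and then summing; both work, and your flagging of the $\hat u^t$-depends-on-$k_t$ subtlety is exactly the point that needs care in either ordering.
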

\begin{proof}
The proof follows a similar structure to the ones in~\cite{he2015mirror,he:harch:2015}. For all $u,u',w\in U$, we have the so-called three-point identity or Generalized Pythagoras theorem
\begin{equation}\label{threetermid}
\langle V'(u', u),w-u'\rangle =V(w,u)-V(w, u')-V(u', u).
\end{equation}
For $x=[u;v]\in X,\;\xi=[\eta;\zeta]$, $\epsilon\geq0$,  let $[u';v']\in P_x(\xi)$. By definition, for all $[s;w]\in X$, the inequality holds
\[
\langle \eta+V'(u',u),u'-s\rangle+\langle\zeta,v'-w\rangle \le 0,
\]
which by \rf{threetermid} implies that
\begin{equation}\label{prox_lemma}
\langle \eta,u'-s\rangle+\langle\zeta,v'-w\rangle \le \langle V'(u', u),s-u'\rangle=
V(s,u)-V(s, u')-V(u',u).
\end{equation}
For simplicity, let use denote $k=k_t$ as the random index at iteration $t$ and let use denote $V_k(u'_k,u_k)=V_k(u',u)$. When applying \rf{prox_lemma} with $X=X_k$ and $V=V_k$, and $[u;v]=[u^{t}_k;v^{t}_k]=x_k^{t}$, $\xi=\gamma_{t} F_k(x^{t})=[\gamma_{t} F_{u,k}(u^{t});\gamma_{t} F_{v,k}]$, $[{u}';{v}']=[\hat{u}_k^{t};\hat v_k^{t}]=y_k^{t}$, and $[s;w]=[u_k^{{t}+1};v_k^{{t}+1}]=x_k^{{t}+1}$, we obtain
\begin{equation}
\label{prox100}
\gamma_{t} [\langle F_{u,k}(u^t),\hat{u}_k^{t}-u_k^{{t}+1}\rangle+\langle F_{v,k},\hat{v}_k^{t}-v_k^{{t}+1}\rangle]\le V_{k}(u^{{t}+1}, u^{t})-V_{k}(u^{{t}+1},\hat{u}^{t})-V_{k}(\hat{u}^{{t}},u^{t})\; ; 
\end{equation}
and applying \rf{prox_lemma} with $[u;v]=x_k^{t}$, $\xi=\gamma_{t} F_k(y^{t})$, $[{u}';{v}']=x_k^{{t}+1}$, and $[s_k;w_k]\in X_k$ we get
\begin{equation}
\label{prox101}
\gamma_{t} [\langle F_{u,k}(\hat{u}^{t}),u_k^{{t}+1}-s_k\rangle+\langle F_{v,k},v_k^{{t}+1}-w_k\rangle]\le V_{k}(s,u^{t})-V_{k}(s, u^{{t}+1})-V_{k}(u^{{t}+1},u^{t}) \; .
\end{equation}
Adding \rf{prox101} to \rf{prox100}, we obtain for every $z=[s;w]\in X$
\begin{equation}
\label{prox102}
\gamma_{t} \langle F_k(y^{t}),y_k^{t}-z_k\rangle \leq V_{k}(s, u^{t})-V_{k}(s, u^{{t}+1})+\sigma_{t,k}
\end{equation}
where 
\begin{equation*}
\sigma_{t,k} := \gamma_{t}\langle F_{u,k}(\hat{u}^{t})-F_{u,k}(u^{t}),\hat{u}_k^{t}-u_k^{{t}+1}\rangle-V_{k}(u^{{t}+1}, \hat{u}^{t})-V_{k}(\hat{u}^{{t}}, u^{t}) \; .
\end{equation*}

Due to the strong convexity, with modulus 1, of $V_k(\cdot,u)$ w.r.t. $\|\cdot\|_k$, we have for all $u,\hat{u}$
\begin{equation*}
V_k(\hat{u}, u)\geq {1\over 2}\|u_k-\hat{u}_k\|_k^2 \; .
\end{equation*} 
Therefore,
\bse
\sigma_{t,k}&\leq& \gamma_{t}\|F_{u,k}(\hat{u}^{t})-F_{u,k}(u^{t})\|_{k,*}\|\hat{u}^{t}_k-u^{{t}+1}_k\|_k-\half\|\hat{u}_k^{t}-u_k^{{t}+1}\|_k^2-\half\|u^{t}_k-\hat{u}_k^{t}\|_k^2\\
&\leq&
\half\left[\gamma_{t}^2\|F_{u,k}(\hat{u}^{t})-F_{u,k}(u^{t})\|_{k,*}^2-\|u_k^{t}-\hat{u}_k^{t}\|_k^2\right]\\
&\leq&
\half\left[\gamma_{t}^2[L_k\|\hat{u}_k^{t}-u_k^{t}\|_k]^2-\|u_k^{t}-\hat{u}_k^{t}\|_k^2\right] \\
&\leq & 0 
\ese
where the last inequality follows from the condition  $\gamma_t\max_{k=1,\ldots,b} L_k\leq 1$. 

Let $V(u',u) = \sum_{k=1}^b V_k(u'_k,u_k)$ for any $u,u'\in U$. Then,  we have 
$V_{k}(s, u^{t})-V_{k}(s, u^{{t}+1}) = V(s,u^t)-V(s,u^{{t}+1})$. 
The inequality ~(\ref{prox102}) now becomes
\begin{equation}
\label{prox103}
\gamma_{t} \langle F_k(y^{t}),y_k^{t}-z_k\rangle \leq V(s,u^t)-V(s,u^{{t}+1}).
\end{equation}
Let us denote $V_k$ as the projection matrix such that $Q_k^Tx= x_k, k = 1,\ldots, n$. Summing up inequalities~(\ref{prox103}) over ${t}=1,2,...,T$, and taking into account that $V(s, u^{T+1})\geq0$, we have
\begin{equation}
\label{prox104}
\sum_{t=1}^T\gamma_{t} \langle Q_{k_t}F_{k_t}(y^{t}),y^{t}-z\rangle \leq V(s,u^1)
\end{equation}

Conditioned on $\{k_1,\ldots, k_{t-1}\}$ , we have 
$\bE_{k_t}[ \langle Q_{k_t}F_{k_t}(y^{t}),y^{t}-z\rangle]=\frac{1}{b}\sum_{k=1}^b \langle Q_k F_k(y^t),y^t-z\rangle =\frac{1}{b}\langle F(y^{t}),y^{t}-z\rangle $. Taking expectation over $\{k_1,\ldots, k_T\}$, we finally conclude that for all $z=[s;w]\in X$,
\begin{align*}
\bE[\sum_{{t}=1}^T\lambda_T^{t}\langle F(y^{t}),y^{t}-z\rangle] 
&\leq{b\cdot V(s,u^1)\over
\sum_{{t}=1}^T\gamma_{t}}, \text{ where }\lambda_T^{t}=\left(\sum_{i=1}^T\gamma_i \right)^{-1}\gamma_{t} \; .
\end{align*}
Invoking the monotonicity of $F$, we end up with (\ref{vigap}). 
\end{proof}

\paragraph{Discussion.} Assume that $X\subset \bR^n$ and the cost of computing the full gradient is $O(n)$, then here is the comparison between the (batch) composite Mirror Prox and the randomized block variant.  \\

 \begin{table}[!ht]
 \begin{center}
{\small
\caption{ composite Mirror Prox : batch vs randomized block}
\begin{tabular}{l|c|c|c|c}
\hline
 Algorithm &type & guarantee &  convergence & average iteration cost\\
\hline\hline
Composite Mirror Prox & batch & primal and dual & $O( L/t)$ & $O(n)$ \\
\hline
Randomized Block Mirror Prox &stoch. &sad. point gap & $O(b\max_{k}L_k/t)$ & $O(n/b)$\\
\hline
\end{tabular}}
\end{center}
\end{table}

\section{Partially Randomized Block Mirror Prox Algorithm}\label{sec:appendix_partRBMP}
There is clearly a delicate tradeoff between the fully randomized algorithm and fully batch algorithm. The optimal tradeoff for our purpose actually lies in between. Indeed, to further improve the overall efficiency, we might prefer to keep updating some variables (those more important and low-dimensional ones) at iteration, while randomly select from other variables (those less important and high-dimensional ones) to update. The problem of our interest - saddle point reformulation (\ref{model:problemsimplified}), is exactly under such situation. 

We introduce a new partially randomized block-decomposition scheme, that accommodates partially randomized block updating rules. If one keeps updating the primal variable $x$ at each iteration, while only updating a random block for the dual variable $y$, one gets a more efficient scheme than both the fully randomized and the fully batch ones.

\paragraph{Variational inequality with partial block structure.} We consider the above variational inequality with block structure, i.e. 
$$X=X_0\times (X_1\times X_2\times\cdots\times X_b)$$
where $X_k$ are closed convex sets, $k=0,1,\ldots,b$. More specifically, we consider the situation
\begin{enumerate}
\item For $k=0, 1,\ldots,b$, $X_k$ is closed convex and its projection $PX_k=\{u_k: x_k=[u_k;v_k]\in X_k\}\subset U_k$, where $U_k$ is convex and closed;
\item For $k=0,1,\ldots,b$, the function $\omega_k(\cdot):U_k\to \bR$ is continuously differentiable 
and also 1-strongly convex w.r.t. some norm $\|\cdot\|_k$, and defines the Bregman distance $V_k(u',u)$.
\item The operator $F(x=[u,v])=[F_u(u);F_v]$ with $F_u(u)=[F_{u,1}(u);\ldots;F_{u,b}(u)]$ and $F_v=[F_{v,1};\ldots; F_{v,b}]$, and assume for any $k=1,\ldots, b$, 
\begin{equation*}
\begin{array}{c}
\|F_{u,k}(u)-F_{u,k}(u')\|_{k,*}\leq L_k\|u_k-u_k'\|_k , \;   \forall u,u'\in U_k \text{ and } u_l=u'_l, l\neq k\\
\|F_{u,k}(u)-F_{u,k}(u')\|_{k,*}\leq G_k\|u_0-u_0'\|_0, \;   \forall u,u'\in U_k \text{ and } u_l=u'_l, l\neq 0\\
\|F_{u,0}(u)-F_{u,0}(u')\|_{0,*}\leq G_k\|u_k-u_k'\|_k, \;   \forall u,u'\in U_k \text{ and } u_l=u'_l, l\neq k\\
\|F_{u,0}(u)-F_{u,0}(u')\|_{0,*}\leq L_0\|u_0-u_0'\|_0, \;   \forall u,u'\in U_k \text{ and } u_l=u'_l, l\neq 0
\end{array}\end{equation*}
\item For $k=0,1,\ldots,b$, the linear form $\langle F_{v,k} ,\cdot \rangle$  is bounded from below and coercive on $U_k$ .
\end{enumerate}

\paragraph{Partially randomized block Mirror Prox.} We present the algorithm below. At each iteration, the algorithm update the block $x_0$ and another block randomly selected from $\{x_1,x_2,\ldots,x_b\}$. 
 \begin{algorithm}[H]
\caption{Partially Randomized Block Mirror Prox Algorithm }
\label{alg:RBMP}
\begin{algorithmic}
\STATE \textbf{Input:} stepsizes $\gamma_t>0$, $t = 1,2, \ldots$
\STATE Initialize $x^1=[u^1;v^1]\in X$
\FOR{$t=1,2,\ldots,T$}
\STATE  Pick $k_t$ at random in $\{1,...,b\}$
\STATE Update 
$y^{t}:=[\hat u^t; \hat v^t]=\left\{
\begin{array}{ll}
P^{k}_{x^t}(\gamma_t[ F_{u,k}(u^t);F_{v,k}]), & k\in\{k_t\cup 0\}\\
x^{t}_k, &  k\notin\{k_t\cup 0\}
\end{array}\right.
$
\STATE Update
$x^{t+1}:=[u^{t+1}; v^{t+1}]=\left\{
\begin{array}{ll}
P^{k}_{x^t}(\gamma_t[ F_{u,k}(\hat u^t);F_{v,k}]), & k\in\{k_t\cup 0\}\\
x^{t}_k, & k\notin\{k_t\cup 0\}
\end{array}\right.
$
\ENDFOR\\
\STATE \textbf{Output:} $x_{T}:=[{u}_{T};{v}_{T}] ={(\sum_{t=1}^t\gamma_t)}^{-1}{\sum_{t=1}^t \gamma_t y^{t}}$
\end{algorithmic}
\end{algorithm}

\paragraph{Convergence analysis.} We have the following result
\begin{theorem}\label{thm:RBMP}
Assume that the sequence of step-sizes $(\gamma_t)$ in the above algorithm satisfy 
\bse
\gamma_t&>&0\\
\gamma_t^2(2bL_k^2+G_k^2)-b&\leq &0 ,\forall k=1,2,\ldots, b\\
\gamma_t^2(2L_0^2+2bG_k^2)-1&\leq&0,\forall k=1,2,\ldots, b.
\ese

Then we have for any $z\in X$
\begin{equation}\label{vigap2}
\bE[\langle F(z), x_T-z\rangle]\leq \frac{\sup_{[u;v]\in X} \{V_0(u_0,u^1_0)+b\sum_{k=1}^bV_k(u_k,u^1_k)\}}{\sum_{t=1}^T\gamma_t}.
\end{equation}
\end{theorem}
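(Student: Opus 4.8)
The plan is to follow the structure of the proof of Theorem~\ref{thm:RBMP} (the fully randomized case), the only genuinely new feature being that \emph{two} blocks — the distinguished block $0$ and the randomly drawn block $k_t$ — are updated at every iteration, so the per-iteration residual couples block $0$ and block $k_t$ through the off-diagonal Lipschitz constants $G_k$. First I would record the three-point identity~\rf{threetermid} and the prox-lemma inequality~\rf{prox_lemma}, which hold verbatim block by block. Applying \rf{prox_lemma} to the two prox steps performed at iteration $t$ for each active block $k\in\{0,k_t\}$ — the $y^t$-step (input $\xi=\gamma_t F_k(x^t)$, output $y_k^t$, tested against $x_k^{t+1}$) and the $x^{t+1}$-step (input $\xi=\gamma_t F_k(y^t)$, output $x_k^{t+1}$, tested against $z_k$) — and adding the two resulting inequalities gives, for every $z=[s;w]\in X$, $\gamma_t\langle F_k(y^t),y_k^t-z_k\rangle\le V_k(s_k,u_k^t)-V_k(s_k,u_k^{t+1})+\sigma_{t,k}$ with $\sigma_{t,k}=\gamma_t\langle F_{u,k}(\hat u^t)-F_{u,k}(u^t),\hat u_k^t-u_k^{t+1}\rangle-V_k(u_k^{t+1},\hat u_k^t)-V_k(\hat u_k^t,u_k^t)$, exactly as in the earlier proof. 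I would then combine these inequalities with weight $1$ on block $0$ and weight $b$ on block $k_t$, so that the right-hand side becomes $\Phi_t-\Phi_{t+1}+\sigma_{t,0}+b\,\sigma_{t,k_t}$, where $\Phi_t:=V_0(s_0,u_0^t)+b\sum_{k=1}^b V_k(s_k,u_k^t)$ telescopes over $t$, since $u_k^{t+1}=u_k^t$ for every $k\ge 1$ with $k\neq k_t$.

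The main obstacle is showing that the residual $\sigma_{t,0}+b\,\sigma_{t,k_t}$ is nonpositive under the stated step-size conditions. Unlike the fully randomized case, in which $F_{u,k}$ was evaluated at points differing only in block $k$, here $\hat u^t$ and $u^t$ differ in both blocks $0$ and $k_t$, so the mixed Lipschitz hypotheses give $\|F_{u,k_t}(\hat u^t)-F_{u,k_t}(u^t)\|_{k_t,*}\le L_{k_t}\|\hat u_{k_t}^t-u_{k_t}^t\|_{k_t}+G_{k_t}\|\hat u_0^t-u_0^t\|_0$, and a symmetric bound for block $0$ with constants $L_0$ and $G_{k_t}$. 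Writing $a=\|\hat u_0^t-u_0^t\|_0$ and $c=\|\hat u_{k_t}^t-u_{k_t}^t\|_{k_t}$, I would maximize each bilinear term $\gamma_t\langle\,\cdot\,,\hat u_k^t-u_k^{t+1}\rangle$ over the norm of $\hat u_k^t-u_k^{t+1}$, use the $1$-strong convexity $V_k(\hat u,u)\ge\tfrac12\|\hat u_k-u_k\|_k^2$ to dominate the two negative Bregman terms by $\tfrac12 a^2$ and $\tfrac12 c^2$, and split the resulting cross-products (those involving $L_0 a$, $G_{k_t}c$, and $L_{k_t}c$, $G_{k_t}a$) by a weighted Young's inequality. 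After collecting the coefficients of $a^2$ and of $c^2$, the requirement ``residual $\le 0$'' reduces precisely to $\gamma_t^2(2L_0^2+2bG_k^2)\le 1$ and $\gamma_t^2(2bL_k^2+G_k^2)\le b$ for $k=k_t$; imposing these for all $k=1,\dots,b$, as in the statement, covers every possible draw. This bookkeeping — choosing the weights $1$ and $b$ and the Young split so that exactly these inequalities emerge — is the delicate part; everything else is routine.

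Finally I would take the conditional expectation over $k_t\sim\mathrm{Unif}\{1,\dots,b\}$: the weight $b$ placed on the random block cancels the selection probability $1/b$, so the weighted left-hand side has conditional expectation $\gamma_t\langle F(y^t),y^t-z\rangle$, while the right-hand side is bounded by $\Phi_t-\Phi_{t+1}$ (the residual being nonpositive pathwise). Summing over $t=1,\dots,T$, discarding $\Phi_{T+1}\ge 0$, dividing by $\sum_t\gamma_t$, and invoking the monotonicity of $F$ together with $x_T=(\sum_t\gamma_t)^{-1}\sum_t\gamma_t y^t$ — which yields $\sum_t\lambda_t\langle F(y^t),y^t-z\rangle\ge\langle F(z),x_T-z\rangle$ with $\lambda_t=\gamma_t/\sum_i\gamma_i$ — gives the bound~\rf{vigap2}. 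The argument is essentially that of~\cite{he2015mirror,he:harch:2015}, modified only by the partial-block bookkeeping described above.
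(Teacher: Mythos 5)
Your proposal follows essentially the same route as the paper's own proof: the same per-block prox inequalities with residuals $\sigma_{t,k}$, the same weighting ($1$ on block $0$, $b$ on the random block) producing the telescoping potential $V_0+b\sum_k V_k$, the same Lipschitz/Young bookkeeping to kill the residual under the stated step-size conditions, and the same expectation-plus-monotonicity finish. The only differences are cosmetic (you are slightly more explicit about the Young split; the exact constants in the step-size condition differ from the paper's by a factor of $2$ on one term, a discrepancy already present between the paper's theorem statement and its own proof).
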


\begin{proof} Similar to previous proof, we have for (\ref{prox102}) for $k=\{k_t\cup 0\}$, i.e. 
\begin{equation}
\label{prox104}
\gamma_{t} \langle F_k(y^{t}),y_k^{t}-z_k\rangle \leq V_{k}(s, u^{t})-V_{k}(s, u^{{t}+1})+\sigma_{t,k}, \forall z=[s;w]\in X
\end{equation}
where 
\begin{equation*}
\sigma_{t,k} := \gamma_{t}\langle F_{u,k}(\hat{u}^{t})-F_{u,k}(u^{t}),\hat{u}_k^{t}-u_k^{{t}+1}\rangle-V_{k}(u^{{t}+1}, \hat{u}^{t})-V_{k}(\hat{u}^{{t}}, u^{t}) \; .
\end{equation*}

We have for $k=k_t$,
\bse
\sigma_{t,k}&\leq& \gamma_{t}\|F_{u,k}(\hat{u}^{t})-F_{u,k}(u^{t})\|_{k,*}\|\hat{u}^{t}_k-u^{{t}+1}_k\|_k-\half\|\hat{u}_k^{t}-u_k^{{t}+1}\|_k^2-\half\|u^{t}_k-\hat{u}_k^{t}\|_k^2\\
&\leq&
\half\left[\gamma_{t}^2\|F_{u,k}(\hat{u}^{t})-F_{u,k}(u^{t})\|_{k,*}^2-\|u_k^{t}-\hat{u}_k^{t}\|_k^2\right]\\
&\leq&
\half\left[\gamma_{t}^2[2L_k^2\|\hat{u}_k^{t}-u_k^{t}\|_k+2G_k^2\|\hat{u}_0^{t}-u_0^{t}\|_0]-\|u_k^{t}-\hat{u}_k^{t}\|_k^2\right] \\
\ese
and also 
\bse
\sigma_{t,0}
&\leq&
\half\left[\gamma_{t}^2[2L_0^2\|\hat{u}_0^{t}-u_0^{t}\|_0+2G_k^2\|\hat{u}_k^{t}-u_k^{t}\|_k]-\|u_0^{t}-\hat{u}_0^{t}\|_0^2\right] \\
\ese
Let Let $V(u',u) =V_0(u'_0,u_0)+b\cdot\sum_{k=1}^b V_k(u'_k,u_k)$ for any $u,u'\in U$. Then,  we have 
$$V_ 0(s,u^{t})-V_0^(s,u^{t+1})+b\cdot [V_{k}(s, u^{t})-V_{k}(s, u^{{t}+1})]= V(s,u^t)-V(s,u^{{t}+1}).$$
Summing up (\ref{prox104}) with $k=k_t$ and $k=0$, we have 
\bse
\gamma_{t}\langle Q_0F_0(y^t)+ b Q_kF_k(y^{t}),y^t-z\rangle &\leq &V(s,u^t)-V(s,u^{{t}+1}) +b\sigma_{t,k}+\sigma_{t,0},\\
&\leq & V(s,u^t)-V(s,u^{{t}+1})
\ese
where the last inequality  follows from the condition  
$$\gamma_t\leq \min_{k=1,\ldots,b}\left\{\frac{1}{\sqrt{2L_0^2+2bG_k^2}},\frac{1}{\sqrt{2L_k^2+2G_k^2/b}}\right\}.$$

 Summing up inequalities~(\ref{prox103}) over ${t}=1,2,...,T$, and taking expectation over $\{k_1,\ldots, k_T\}$, we finally conclude that for all $z=[s;w]\in X$,
\begin{align*}
\bE[\sum_{{t}=1}^T\lambda_T^{t}\langle F(y^{t}),y^{t}-z\rangle] 
&\leq{V(s,u^1)\over
\sum_{{t}=1}^T\gamma_{t}}, \text{ where }\lambda_T^{t}=\left(\sum_{i=1}^T\gamma_i \right)^{-1}\gamma_{t} \; .
\end{align*}
Invoking the monotonicity of $F$, we end up with (\ref{vigap2}). 
\end{proof}

\paragraph{Convergence analysis for penalized Poisson regression.} When solving the saddle point reformulation (\ref{model:problemsimplified}) with the randomized block Mirror Prox algorithm~\ref{alg:randomizedCMP} presented in Section~\ref{sec:extension}, we specifically have $L_k=L_0=0$ and $G_k=\max_{x\in\bR^n_+:\|x\|_x\leq 1} \{\|A_kx\|_2\}$, which gives rise to Theorem~\ref{prop:RandomizedCMP}.

\section{Application: Positron Emission Tomography}\label{sec:appendix_PET}
PET imaging plays an important role in nuclear medicine for detecting cancer and metabolic changes in human organ. Image reconstruction in PET has a long history of being treated as a Poisson likelihood model~\cite{ben2001ordered, harmany2012spiral}. To estimate the density of radioactivity within an organ corresponds to solving the convex optimization problem
\begin{equation}\label{model:PET}
\begin{array}{c}
\min_{x\in \bR_+^n} \sum_{i=1}^m\left[[Ax]_i-w_i\log([Ax]_i)\right]
\end{array}
\end{equation}
where $A$ refers to the likelihood matrix known from the geometry of detector, and $w$ refers to the vector of events detected with $w_i\sim \textit{Poisson} ([Ax]_i), 1\leq i\leq m.$ Clearly, this is a special case of Poisson regression  (\ref{model:problemofinterest}). For simplicity, we shall not consider any penalty  term for this application. 

\paragraph{Saddle Point Reformulation} Invoking the optimality conditions for the above problem, we have
\begin{equation*}
x_j\sum_{i=1}^m \left[a_{ij} - w_i\frac{a_{ij}}{[Ax]_i}\right]=0, \forall j =1,\ldots, n,
\end{equation*}
whence, summing over $j$ and taking into account that $A$ is stochastic, we get\footnote{Note that this is essentially a special case revealed by Remark 1 in previous section.} 
\begin{equation*}
\begin{array}{c}
\sum_{j=1}^n x_j=\sum_{i=1}^m w_i=:\theta.
\end{array}
\end{equation*}
We loose nothing by adding to problem (\ref{model:PET}) the equality constraints $\sum_{j=1}^n x_j=\theta$. Invoking the saddle point reformulation in the previous section, solving the PET recovery problem (\ref{model:PET}) is equivalent to solving the convex-concave saddle point problem:
\begin{eqnarray}\label{model:saddlePET}
\min_{{x\in \bR_+^n: \sum_{j=1}^n x_j=\theta}} \max_{y\in\bR_{++}^m}-y^TAx+\sum_{i=1}^m w_i \log(y_i)+\tilde\theta
\end{eqnarray}
where $\tilde\theta=2\theta-\sum_{i=1}^m\omega_i\log(\omega_i)$ is a constant. 

\paragraph{Composite Mirror Prox algorithm for PET} Noting that the domain over $x$ is a simplex, a good choice for proximal setup is to use the entropy function $\omega(x)=\sum_{j=1}^nx_i\log(x_j)$. For completeness, we customize the algorithm and provide full algorithmic details for this specific example (\ref{model:saddlePET}).

\begin{algorithm}
\caption{Composite Mirror Prox for PET}
\label{CMPforPET}
\begin{algorithmic}
\STATE 0. Initialize $x^1\in \bR^n_+$, $y^1\in\bR^n_{++}$, $\alpha>0$ and $\gamma_t>0$,
\FOR{$t=1,2,\ldots, T$}
  \STATE 1. Compute\small
\vspace{-2mm}
$$\begin{array}{l}
\vspace{2mm}
\hat x^t_j=x^t_j\exp(-[A^Ty]_j/\alpha), \forall j, \text{and normlize to sum up to } \theta\\
\vspace{2mm} 
\hat y^t_i= \frac{-\gamma_t(a_i^Tx^t-y^t_i)+\sqrt{\gamma_t^2(a_i^Tx^t-y^t_i)^2+4\gamma_tw_i}}{2}, \forall i 
\end{array}$$
\vspace{-4mm}
 \STATE 2. Compute 
\vspace{-2mm}
$$\begin{array}{l}
\vspace{2mm}
x^{t+1}_j=x^t_j\exp(-[A^T\hat y]_j/\alpha), \forall j,\text{and normlize to sum up to } \theta\\
\vspace{2mm}
y^{t+1}_i= \frac{-\gamma_t(a_i^T\hat x^t-y^t_i)+\sqrt{\gamma_t^2(a_i^T\hat x^t-y^t_i)^2+4\gamma_tw_i}}{2},\forall i
\end{array}$$
\vspace{-4mm}
\ENDFOR\\
Output $x_{T} =\frac{\sum_{t=1}^T \gamma_t x^t}{\sum_{t=1}^T\gamma_t}$
\end{algorithmic}
\end{algorithm}

\paragraph{Remark.} Let $x_*$ be the true image. Note that when there is no Poisson noise, $w_i=[Ax_*]_i$ for all $i$. In this case, the optimal solution $y_*$ corresponding to the $y$-component of the saddle point problem (\ref{model:saddlePET}) is given by $y_{*,i}=w_i/[Ax_*]_i=1, \forall i$. Thus, we may hope that under the Poisson noise, the optimal $y_*$ is still close to $1$. Assuming that this is the case, the efficiency estimate for $T$-step composite Mirror Prox algorithm in Algorithm~\ref{CMPforPET} after invoking Proposition~\ref{prop:Poisson} and setting $\alpha=r^2m$ for some $r>0$, will be
$$O(1)\left(\log(n)+\frac{1}{2r^2}\right)\frac{r\theta\sqrt{m}\|A\|_{1\to2}}{T}.$$
Since $A$ is $m\times n$ stochastic matrix, we may hope that the Euclidean norms of columns in $A$ are of order $O(m^{-1/2})$, yielding the efficiency estimate $O(1)\left(\log(n)+\frac{1}{2r^2}\right)\frac{r\theta}{T}.$
Let us look what happens in this model when $x_*$ is ``uniform", i.e. all entries in $x_*$ are $\theta/n$. In this case, the optimal value is $\theta-\theta\log(\theta)+\theta\log(n)$, which is typically of order $O(\theta)$, implying that relative to optimal value rate of convergence is about $O(1/T)$.

\paragraph{Numerical results.} We ran experiments on several phantom images of size $256\times 256$. We built the matrix $A$, which is of size $43530 \times 65536$. To evaluate the efficiency our algorithm, we consider the noiseless situation; hence, the optimal solution and objective value are known. We also compare our algorithm in terms of the relative accuracy, i.e. $(f(x_t)-f_*)/f_*$,  to the Mirror Descent algorithm proposed in~\cite{ben2001ordered} and the Non-monotone Maximum Likelihood  algorithm in~\cite{sra2008non}.  Results are presented in Figure~\ref{fig:CoMPvsMD}.  Fig.\ref{fig:MRI accuracy} corroborates the sublinear convergence of the proposed algorithm; Fig.\ref{fig:CoMPrecoveryMRI} provides mid-slices of recovery images of the algorithm; Fig.\ref{fig:Logan accuracy} shows that our algorithm outperforms both competitors after a small number of iteration. This experiment clearly demonstrates that our composite Mirror Prox is an interesting optimization algorithm for PET reconstruction.
\begin{figure}
\begin{center}
\begin{subfigure}[b]{0.46\textwidth}
                   \includegraphics[width=1\textwidth]{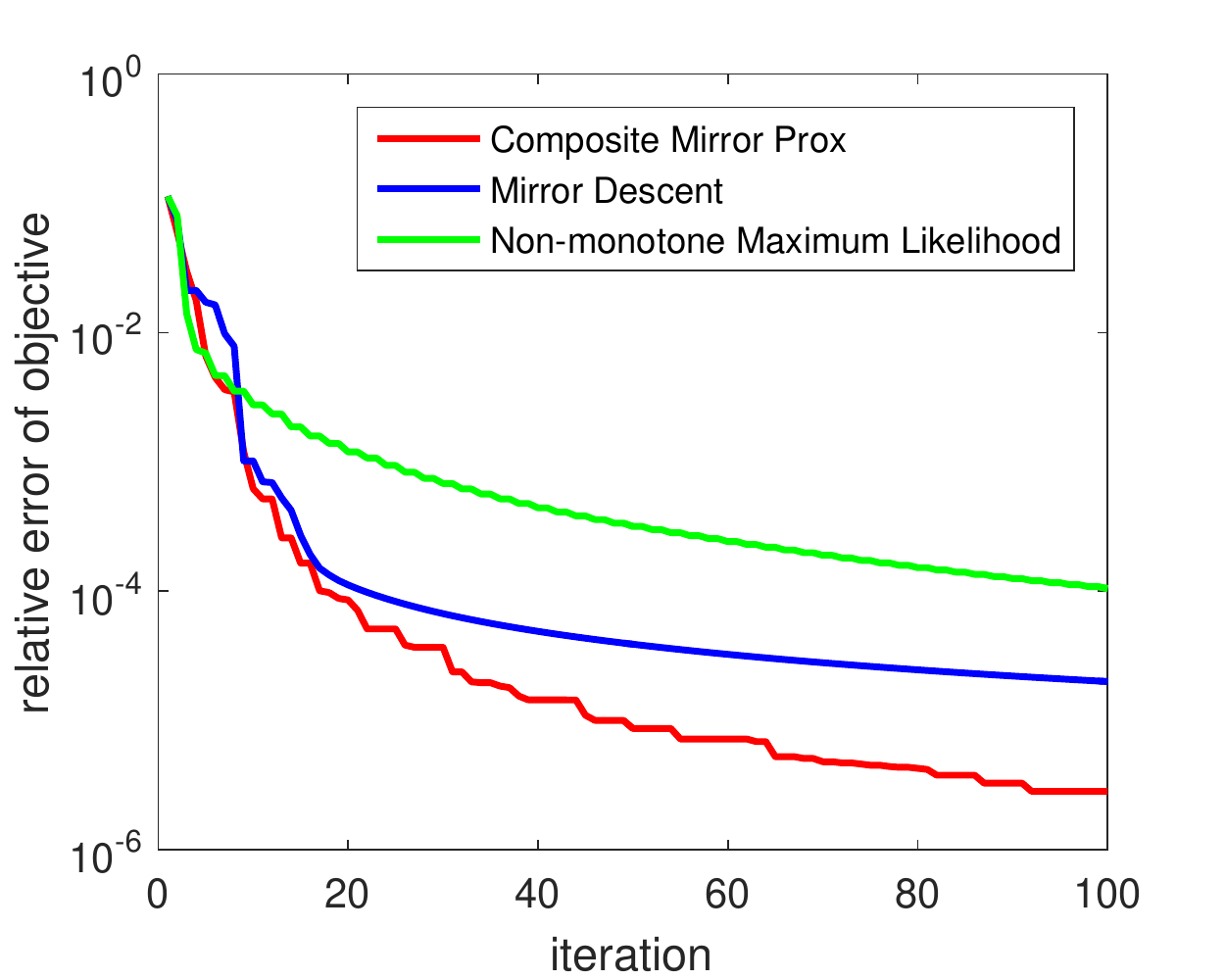}
			\caption{Shepp-Logan image}
                \label{fig:Logan accuracy}
\end{subfigure}
\begin{subfigure}[b]{0.46\textwidth}
                 \includegraphics[width=1\textwidth]{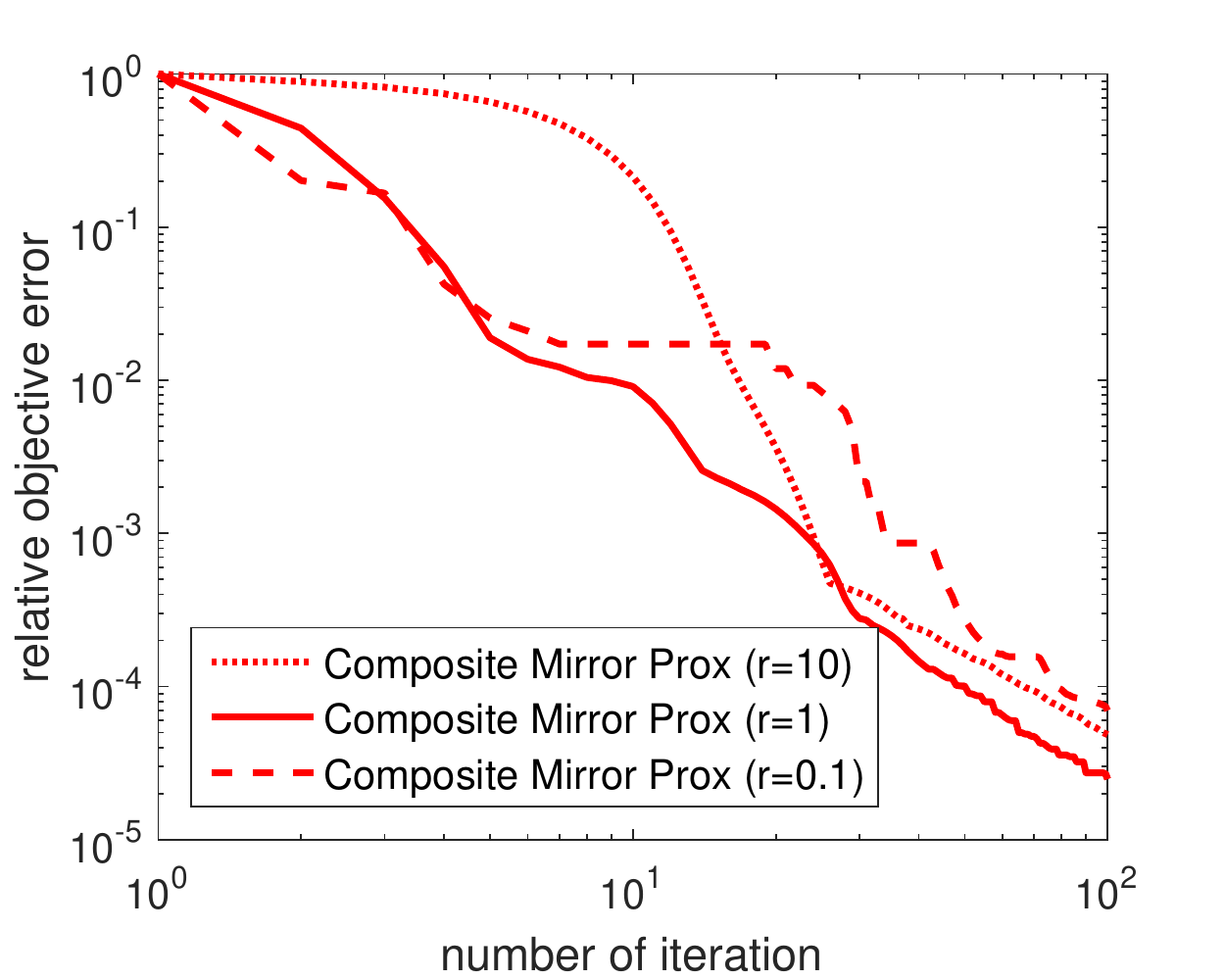}
			\caption{MRI brain image}
                \label{fig:MRI accuracy}
\end{subfigure}
\vspace{0.2cm}

\begin{subfigure}[b]{0.7\textwidth}
			\includegraphics[width=1\textwidth]{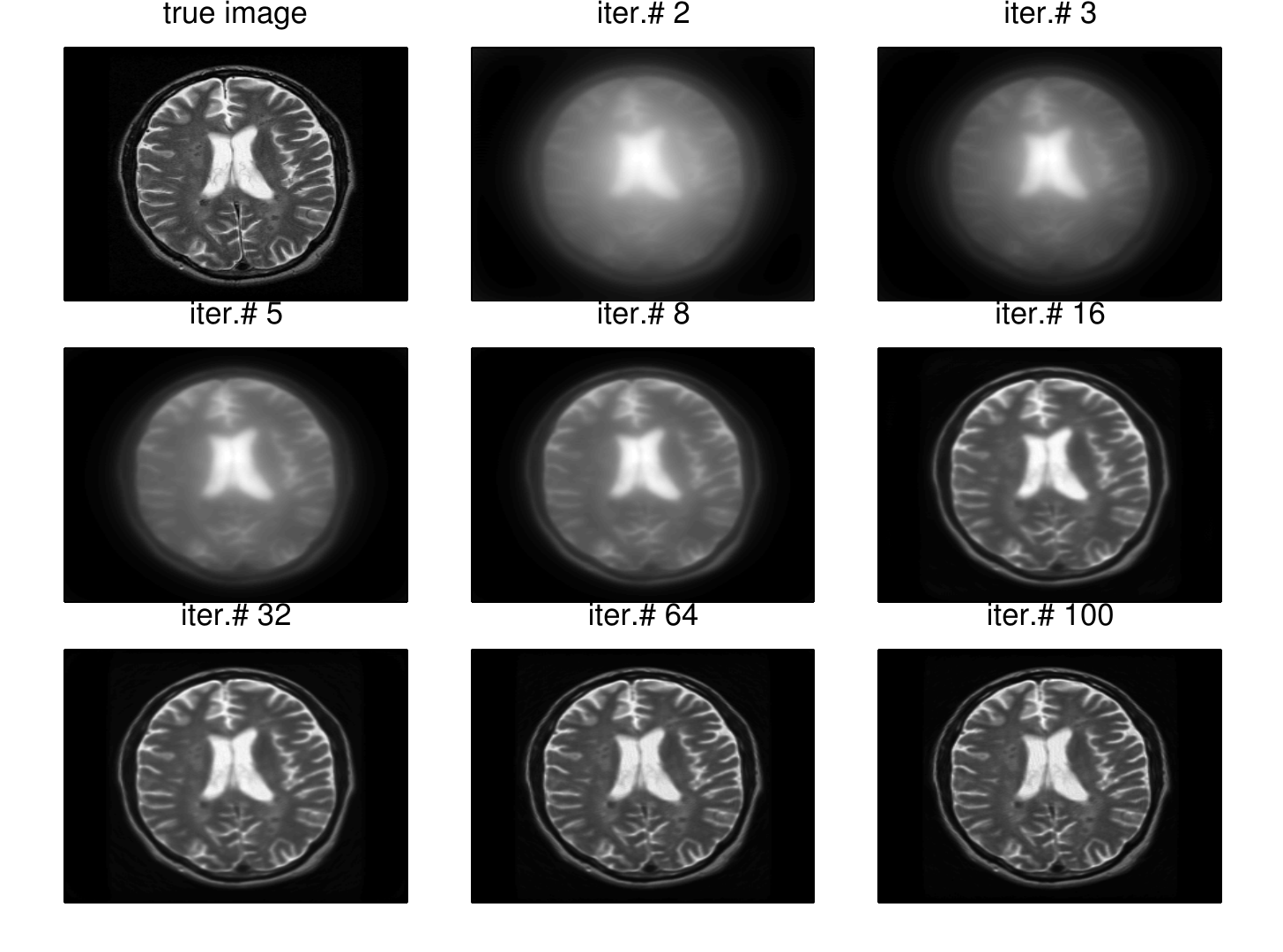}
			\caption{MRI brain image}
			\label{fig:CoMPrecoveryMRI}
\end{subfigure}
\caption{Positron Emission Tomography Reconstruction.\\[-0.5cm] } 

\label{fig:CoMPvsMD}
\end{center}
\end{figure}

\section{Application: Network Estimation}\label{sec:appendix_network}
\paragraph{Problem description.} Given a sequence of events $\{(u_j,t_j)\}_{j=1}^m$, the goal is to estimate the influence matrix among users. We focus on the convex formulation as posed in \cite{zhou2013learning} 
\begin{eqnarray}\label{model:plainHawkes}
&\min\limits_{x\in \bR_+^U, X\in\bR_+^{U\times U}} L(x,X) +\lambda_1\|X\|_1+\lambda_2\|X\|_\nuc\;\;\;
\end{eqnarray}
where 
$L(x,X):= \sum_{u=1}^U [Tx_u+\sum_{j=1}^m X_{uu_j}G(T-t_j)]
-\sum_{j=1}^m\log\big(x_{u_j}+\sum_{k:t_k<t_j}X_{{u_j}{u_k}}g(t_j-t_k)\big)$ is the log-likelihood term, $\|\cdot\|_1$ is the $L_1$ norm  and $\|\cdot\|_\nuc$ is the nuclear norm. Variable $x\geq 0$ denotes the base intensity for all users and matrix $X\geq 0$ the  infectivity matrix. Moreover, $G(t)=\int_{0}^Tg(s)ds$, where the exponential kernel $g(t)=ce^{-ct}$ with $c=1$. 
The function $L(x,X)$ can be simplified to
$$L(x,X)=T\cdot \mathbf{1}^Tx+\mathbf{1}^TXd  -\sum_{j=1}^m\log\left(e_j^Tx+\Tr(A_jX)\right)$$
by setting 
$ d=(d_u)_{u=1,\ldots,U}, e_j=(e_u^j)_{u=1,\ldots, U}, A_j=[a_{uu'}^j]_{u,u'=1,\ldots, U}$ with
\begin{equation*}
\begin{array}{c}
d_u=\sum_{j: u_j=u} G(T-t_j), \quad e_u^j=\left\{\begin{array}{ll} 1,&u=u_j\\ 0,&\text{o.w.}\end{array}\right.,\;
a_{uu'}^j=\left\{\begin{array}{ll}
\sum_{k:t_k<t_j, u_k=u'}g(t_j-t_k), &u=u_j\\
0,&\text{o.w.}
\end{array}\right.
\end{array}
\end{equation*}
for any $ j=1,\ldots, m$. For the sake of simplicity, we will consider the case when $\lambda_2=0$, i.e. no additional regularization term besides the $L_1$ norm. One can more clearly see that the above formulation again falls into the penalized Poisson regression problem in (\ref{model:problemofinterest}).

\paragraph{Saddle point reformulation.} The corresponding saddle point problem is therefore given by 
\begin{eqnarray}\label{model:saddleHawkes}
\min_{x,\in \Delta_x, X\in\Delta_X} \max_{y>0}\quad \underbrace{T\cdot \mathbf{1}^Tx+\mathbf{1}^TXd -\sum_{j=1}^my_j\left(e_j^Tx+\Tr(A_jX)\right)+m}_{\phi(x,X;y)}+\underbrace{\sum_{j=1}^m\log(y_i)}_{-\Psi_2(y)} +\underbrace{\lambda_1\|X\|_1}_{\Psi_1(X)}
\end{eqnarray}
where $\phi(x,X;y)$ is Lipschitz differentiable, $\Psi_1(X)$ and $\Psi_2(y)$ are convex and proximal-friendly.  

\paragraph{Composite Mirror Prox algorithm for network estimation.} Invoking Lemma~\ref{lem:simplefact}, the optimal solution $(x_*,X_*)$ to the above model satisfies
\begin{equation}\label{eq:relation}
T\cdot \mathbf{1}^Tx_*+\mathbf{1}^TX_*d +\lambda_1\|X_*\|_1=m.
\end{equation}
The above observation implies that we are allowed to add to the problem a bounded domain 
$$\cX\subset\{(x,X): x\geq 0, X\geq 0, \,T\cdot \mathbf{1}^Tx+\mathbf{1}^TXd+\lambda_1\|X_*\|_1=m \}.$$
A simple option is  perhaps to choose $\cX=\Delta_x\times\Delta_X$, where $\Delta_x:=\{x\geq 0, \sum_ux_u\leq \frac{m}{T}\}$ and $\Delta_X:=\{X\geq 0, \sum_{u,u'}X_{uu'}\leq \frac{m}{\lambda}\}$. Seemingly, a good choice of the proximal setup is to equip the domain with with entropy distance generating function. We suggest to use the following proximal setups:
$$ \omega(u=[x,X;y])=\alpha_1\sum_{u} x_u\log(x_u)+\alpha_2\sum_{u,u'}X_{uu'}\log(X_{uu'})+\frac{1}{2}\|y\|_2^2$$

We present in Algorithm~\ref{alg:CMPforHawkes} the customized Composite Mirror Prox algorithm for the network estimation problem. 
\begin{algorithm}[H]
\caption{Composite Mirror Prox Algorithm for Problem (\ref{model:saddleHawkes})}
\label{alg:CMPforHawkes}
\begin{algorithmic}
\STATE Given $\alpha_1, \alpha_2>0$ and $\gamma_t>0$.
\STATE 0. Initialize $x^1\in \Delta_x$, $X^1\in\Delta_X$, $y^1_j=1/({e_j^Tx^1+\Tr(A_jX^1)}), j=1,2,\ldots,m$.
\FOR{$t=1,2,\ldots, T$}
  \STATE 1. Compute
\vspace{-2mm}
$$\begin{array}{rll}
\hat x^t_u&=x_u^t\exp\{-\gamma_t(T-\sum_{j}y_j^te^j_u)/\alpha_1\}, \; \forall u, \\
\hat X^t_{u,u'}&=X_{u,u'}^t\exp\{-\gamma_t(d_{u'}-\sum_{j} y_j^ta_{uu'}^j +\lambda_1)/\alpha_2\}, \; \forall u,u',\\
\hat y^t_j&= \frac{1}{2}\left(-\gamma_t(q_j-y^t_j)+\sqrt{\gamma_t^2(q_j-y^t_j)^2+4\gamma_t}\right), \forall j, \text{ where } q_j=e_j^Tx^t+\Tr(A_jX^t).
\end{array}$$

 \STATE 2. Compute
\vspace{-2mm}
$$\begin{array}{rll}
 x^{t+1}_u&=x_u^t\exp\{-\gamma_t(T-\sum_{j}\hat y_j^te^j_u)/\alpha_1\}, \; \forall u,  \\
 X^{t+1}_{u,u'}&=X_{u,u'}^t\exp\{-\gamma_t(d_{u'}-\sum_{j} \hat y_j^ta_{uu'}^j+\lambda_1)/\alpha_2\}, \\
 y^{t+1}_j&= \frac{1}{2}\left(-\gamma_t(q_j-y^t_j)+\sqrt{\gamma_t^2(q_j-y^t_j)^2+4\gamma_t}\right), \forall j, \text{ where } q_j=e_j^T\hat x^t+\Tr(A_j\hat X^t).
\end{array}$$

\ENDFOR\\
Output $x_{T} =\frac{1}{T}\sum_{t=1}^T \lambda_t x^t$
\end{algorithmic}
\end{algorithm}

\paragraph{Remark.} To avoid redundancy, we are not going to present the full algorithmic steps for the randomized block Mirror Prox algorithm. Note that the dual variables $y$ can be naturally divided into blocks that corresponds to the data points of each user. In our experiment, at each iteration, we randomly pick one user and use its data points to compute the gradient and proceed the update. The iteration computation cost reduces from $O(m)$ to $O(m/n)$, where $m$ is the total number of events, and $n$ is the number of users. We provide below the theoretical convergence rates for the three algorithms, Mirror Descent, composite Mirror Prox, and randomized block Mirror Prox. 
\begin{table}[H]
{\small
\caption{Convergence rates of different algorithms for Network Estimation}
\begin{tabular}{c|c|c|c|c|c}
\hline
optimization algorithm &type & guarantee &avg. iteration cost &  convergence &  constant\\
\hline\hline
MD \cite{ben2001ordered} &batch&  primal & $O(m)$ &$O(M/\sqrt{t})$ &  $M$ unbounded\\
\hline
CMP(this paper) & batch & primal and dual & $O(m/n)$ & $O(\cL/t)$ &  $\cL$ bounded\\
\hline
RB-CMP (this paper) &stoch. &sad. point gap & $O(m/n)$ & $O(\cL/t)$ &  $\cL$ bounded\\
\hline
\end{tabular}}
\end{table}

\end{appendix}

\end{document}